\title{\huge Generalization Bounds of Stochastic Gradient Descent for Wide and Deep Neural Networks}
\author
{
	Yuan Cao\thanks{Department of Computer Science, University of California, Los Angeles, CA 90095, USA; e-mail: {\tt yuancao@cs.ucla.edu}} 
	~~~and~~~
	Quanquan Gu\thanks{Department of Computer Science, University of California, Los Angeles, CA 90095, USA; e-mail: {\tt qgu@cs.ucla.edu}}
}
\date{}
\def\supp{\mathrm{supp}}
\def\Tr{\mathrm{Tr}}
\def\poly{\mathrm{poly}}
\newcommand{\la}{\langle}
\newcommand{\ra}{\rangle}
\begin{document}

\maketitle

\begin{abstract}
We study the training and generalization of deep neural networks (DNNs) in the over-parameterized regime, where the network  width (i.e., number of hidden nodes per layer) is much larger than the number of training data points. We show that, the expected $0$-$1$ loss of a wide enough ReLU network trained with stochastic gradient descent (SGD) and random initialization can be bounded by the training loss of a random feature model induced by the network gradient at initialization, which we call a \textit{neural tangent random feature} (NTRF) model. For data distributions that can be classified by NTRF model with sufficiently small error, our result yields a generalization error bound in the order of $\tilde\cO(n^{-1/2})$ that is independent of the network width. Our result is more general and sharper than many existing generalization error bounds for over-parameterized neural networks. In addition, we establish a strong connection between our generalization error bound and the neural tangent kernel (NTK) proposed in recent work. 
\end{abstract}



\section{Introduction}

Deep learning has achieved great success in a wide range of applications including image processing \citep{krizhevsky2012imagenet}, natural language processing \citep{hinton2012deep} and reinforcement learning \citep{silver2016mastering}. 
Most of the deep neural networks used in practice are highly over-parameterized, such that the number of parameters is much larger than the number of training data. 
One of the mysteries in deep learning is that, even in an over-parameterized regime, neural networks trained with stochastic gradient descent can still give small test error and do not overfit. In fact, a famous empirical study by \citet{zhang2016understanding} shows the following phenomena:
\begin{itemize}[leftmargin = *]
    \item Even if one replaces the real labels of a training data set with purely random labels, an over-parameterized neural network can still fit the training data perfectly. However since the labels are independent of the input, the resulting neural network does not generalize to the test dataset.
    \item If the same over-parameterized network is trained with real labels, it not only achieves small training loss, but also generalizes well to the test dataset.
\end{itemize}
While a series of recent work has theoretically shown that a sufficiently over-parameterized (i.e., sufficiently wide) neural network can fit random labels \citep{du2018gradient,allen2018convergence,du2018gradientdeep,zou2018stochastic}, the reason why it can generalize well when trained with real labels is less understood. Existing generalization bounds for deep neural networks \citep{neyshabur2015norm,bartlett2017spectrally,neyshabur2017pac,golowich2017size,dziugaite2017computing, arora2018stronger,li2018tighter,wei2018margin,neyshabur2018role} based on uniform convergence usually cannot provide non-vacuous bounds \citep{langford2002not,dziugaite2017computing} in the over-parameterized regime. In fact, the empirical observation by \citet{zhang2016understanding} indicates that in order to understand deep learning, 
it is important to distinguish the true data labels from random labels when studying generalization. In other words, it is essential to quantify the ``classifiability'' of the underlying data distribution, i.e., how difficult it can be classified.  

Certain effort has been made  to take the ``classifiability'' of the data distribution into account for generalization analysis of neural networks. 
\citet{brutzkus2017sgd} showed that stochastic gradient descent (SGD) can learn an over-parameterized two-layer neural network with good generalization for linearly separable data. \citet{li2018learning} proved that, if the data satisfy certain structural assumption, SGD can learn an over-parameterized two-layer network with fixed second layer weights and achieve a small generalization error. \citet{allen2018generalization} studied the generalization performance of SGD and its variants for learning two-layer and three-layer networks, and used  the risk of smaller two-layer or three-layer networks with smooth activation functions to characterize the classifiability of the data distribution.
There is another line of studies on the algorithm-dependent generalization bounds of neural networks in the over-parameterized regime  \citep{daniely2017sgd,arora2019fine,cao2019generalization,yehudai2019power,e2019comparative}, which
quantifies the classifiability of the data
with a reference function class defined by random features \citep{rahimi2008random,rahimi2009weighted} or kernels\footnote{Since random feature models and kernel methods are highly related \citep{rahimi2008random,rahimi2009weighted}, we group them into the same category. More details are discussed in Section~\ref{subsection:connection_to_NTK}.}. 
Specifically, \citet{daniely2017sgd} showed that a neural network of large enough size is competitive with the best function in the conjugate kernel class of the network. \citet{arora2019fine} gave a generalization error bound for two-layer ReLU networks with fixed second layer weights based on a ReLU kernel function. \citet{cao2019generalization} showed that deep ReLU networks trained with gradient descent can achieve small generalization error if the data can be separated by certain random feature model \citep{rahimi2009weighted} with a margin. \citet{yehudai2019power} used the expected loss of a similar random feature model to quantify the generalization error of two-layer neural networks with smooth activation functions. A similar generalization error bound was also given by \citet{e2019comparative}, where the authors studied the optimization and generalization of two-layer networks trained with gradient descent. However, all the aforementioned results are still far from satisfactory: they are either limited to two-layer networks, or restricted to very simple and special reference function classes.

In this paper, we aim at providing a sharper and generic analysis on the generalization of deep ReLU networks trained by SGD. In detail, we base our analysis upon the key observations that near random initialization, the neural network function is almost a linear function of its parameters and the loss function is locally almost convex. This enables us to prove a cumulative loss bound of SGD, which further leads to a generalization bound by online-to-batch conversion \citep{cesa2004generalization}. 
The main contributions of our work are summarized as follows:
\begin{itemize}[leftmargin= *]
    \item We give a bound on the expected $0$-$1$ error of deep ReLU networks trained by SGD with random initialization. Our result relates the generalization bound of an over-parameterized ReLU network with a random feature model defined by the network gradients, which we call \textit{neural tangent random feature} (NTRF) model. It also suggests an algorithm-dependent generalization error bound of order $\tilde\cO(n^{-1/2})$, which is independent of network width, if the data can be classified by the NTRF model with small enough error. 
    \item Our analysis is general enough to cover recent generalization error bounds for neural networks with random feature based reference function classes, and provides better bounds. Our expected $0$-$1$ error bound directly covers the result by \citet{cao2019generalization}, and gives a tighter sample complexity when reduced to their setting, i.e., $\tilde\cO(1/\epsilon^2)$ versus $\tilde\cO(1/\epsilon^4)$ where $\epsilon$ is the target generalization error. Compared with recent results by  \citet{yehudai2019power,e2019comparative} who only studied two-layer networks, our bound not only works for deep networks, but also uses a larger reference function class when reduced to the two-layer setting, and therefore is  sharper. 
    \item Our result has a direct connection to the neural tangent kernel studied in \citet{jacot2018neural}. When interpreted in the language of kernel method, our result gives a generalization bound in the form of $\tilde\cO(L\cdot \sqrt{\yb^\top (\bTheta^{(L)})^{-1} \yb / n })$, where $\yb$ is the training label vector, and $\bTheta^{(L)}$ is the neural tangent kernel matrix defined on the training input data. This form of generalization bound is similar to, but more general and tighter than the bound given by \citet{arora2019fine}.   
\end{itemize}

\noindent\textbf{Notation}
We use lower case, lower case bold face, and upper case bold face letters to denote scalars, vectors and matrices respectively. 
For a vector $\vb=(v_1,\ldots,v_d)^T\in \RR^d$ and a number $1\leq p < \infty$, let $\|\vb\|_p = (\sum_{i=1}^d |v_i|^p)^{1/p}$. We also define $\|\vb\|_\infty = \max_i|v_i|$. For a matrix $\Ab = (A_{i,j})_{m\times n} $, we use $\|\Ab \|_{0}$ to denote the number of non-zero entries of $\Ab$, and denote 
$\| \Ab  \|_F = (\sum_{i,j=1}^d A_{i,j}^2)^{1/2}$ and $\|\Ab \|_p = \max_{\|\vb\|_p = 1} \|\Ab \vb\|_p$ for $p\geq 1$. For two matrices $\Ab,\Bb \in \RR^{m\times n}$, we define $\la \Ab, \Bb \ra = \Tr(\Ab^\top \Bb)$. We denote by $\Ab \succeq \Bb$ if $\Ab - \Bb$ is positive semidefinite. In addition, we define the asymptotic notations $\cO(\cdot)$, $\tilde{\cO}(\cdot)$, $\Omega(\cdot)$ and $\tilde\Omega(\cdot)$ as follows. Suppose that $a_n$ and $b_n$ be two sequences. 
We write $a_n = \cO(b_n)$ if $\limsup_{n\rightarrow \infty} |a_n/b_n| < \infty$, and $a_n = \Omega(b_n)$ if $\liminf_{n\rightarrow \infty} |a_n/b_n| > 0$. We use $\tilde{\cO}(\cdot)$ and $\tilde{\Omega}(\cdot)$ to hide the logarithmic factors in $\cO(\cdot)$ and $\Omega(\cdot)$.


\section{Problem Setup}
In this section we introduce the basic problem setup. 
Following the same standard setup implemented in the line of recent work \citep{allen2018convergence,du2018gradientdeep,zou2018stochastic,cao2019generalization}, we consider fully connected neural networks with width $m$, depth $L$ and input dimension $d$. 
Such a network is defined by its weight matrices at each layer: for $L\geq 2$, let $\Wb_1 \in \RR^{m\times d}$, $\Wb_{l}\in \RR^{m\times m}$, $l=2,\ldots, L-1$ and $\Wb_{L} \in \RR^{1\times m}$ be the weight matrices of the network. Then the neural network with input $\xb\in\RR^d$ is defined as
\begin{align}\label{eq:NNdefinition}
    f_\Wb(\xb) = \sqrt{m}\cdot \Wb_L \sigma( \Wb_{L - 1} \sigma( \Wb_{L - 2} \cdots \sigma( \Wb_{1} \xb ) \cdots )),
\end{align}
where $\sigma(\cdot)$ is the entry-wise activation function. In this paper, we only consider the ReLU activation function $\sigma(z) = \max\{ 0, z \}$, which is the most commonly used activation function in applications. It is also arguably one of the most difficult activation functions to analyze, due to its non-smoothess. We remark that our result can be generalized to many other Lipschitz continuous and smooth activation functions. For simplicity, we follow \citet{allen2018convergence,du2018gradientdeep} and assume that the widths of each hidden layer are the same. Our result can be easily extended to the setting that the widths of each layer are not equal but in the same order, as discussed in \citet{zou2018stochastic,cao2019generalization}. 


When $L=1$, the neural network reduces to a linear function, which has been well-studied. Therefore, for notational simplicity we focus on the case $L\geq 2$, where the parameter space is defined as
\begin{align*}
    \cW := \RR^{m\times d} \times (\RR^{m\times m})^{L- 2} \times \RR^{1\times m}.
\end{align*}
We also use $\Wb = (\Wb_1,\ldots,\Wb_L) \in \cW$ to denote the collection of weight matrices for all layers. 
For $\Wb, \Wb'\in \cW$, we define their inner product as $\la \Wb , \Wb' \ra := \sum_{l=1}^L \Tr ( \Wb_l^\top \Wb_l' )$. 

The goal of neural network learning is to minimize the expected risk, i.e.,
\begin{align}\label{eq:risk}
\min_{\Wb} L_\cD(\Wb):= \EE_{(\xb,y)\sim \cD} L_{(\xb,y)}(\Wb),
\end{align}
where $L_{(\xb,y)}(\Wb) = \ell[ y \cdot f_\Wb(\xb) ] $ is the loss defined on any  example $(\xb,y)$, and $\ell(z)$ is the loss function. Without loss of generality, we consider the cross-entropy loss in this paper, which is defined as  $\ell(z) = \log[1 + \exp(-z)]$. We would like to emphasize that our results also hold for most convex and Lipschitz continuous loss functions such as hinge loss.
We now introduce stochastic gradient descent based training algorithm for minimizing the expected risk in \eqref{eq:risk}. The detailed algorithm is given in Algorithm~\ref{alg:SGDrandominit}. 




\begin{algorithm}[H]
\caption{SGD for DNNs starting at Gaussian initialization}\label{alg:SGDrandominit}
\begin{algorithmic}
\STATE \textbf{Input:} Number of iterations $n$, step size $\eta$.
\STATE Generate each entry of $\Wb_l^{(1)}$ independently from $N(0,2/m)$, $l\in[L-1]$.
\STATE Generate each entry of $\Wb_L^{(1)}$ independently from $N(0,1/m)$.
\FOR{$i=1,2,\ldots, n$}
\STATE Draw $(\xb_i,y_i)$ from $\cD$.
\STATE Update $\Wb^{(i+1)} = \Wb^{(i)} - \eta\cdot \nabla_{\Wb} L_{(\xb_i,y_i)}(\Wb^{(i)})$.
\ENDFOR
\STATE \textbf{Output:} Randomly choose $\hat \Wb$ uniformly from $\{ \Wb^{(1)} ,\ldots, \Wb^{(n)} \}$.
\end{algorithmic}
\end{algorithm}

The initialization scheme for $\Wb^{(1)}$ given in Algorithm~\ref{alg:SGDrandominit} generates each entry of the weight matrices from a zero-mean independent Gaussian distribution, whose variance is determined by the rule that the expected length of the output vector in each hidden layer  is equal to the length of the input. This initialization method is also known as He initialization \citep{he2015delving}. Here the last layer parameter is initialized with variance $1/m$ instead of $2/m$ since the last layer is not associated with the ReLU activation function. 


\section{Main Results}
In this section we present the main results of this paper. In Section~\ref{subsection:expected01bound} we give an expected $0$-$1$ error bound against a neural tangent random feature reference function class. In Section~\ref{subsection:connection_to_NTK}, we discuss the connection between our result and the neural tangent kernel proposed in \citet{jacot2018neural}. 
\subsection{An Expected $0$-$1$ Error Bound}\label{subsection:expected01bound}
In this section we 
give a bound on the expected $0$-$1$ error $L_\cD^{0-1}(\Wb) := \EE_{(\xb,y)\sim \cD} [\ind \{y\cdot f_{\Wb}(\xb)<0\}]$ obtained by Algorithm~\ref{alg:SGDrandominit}. 
Our result is based on the following assumption.

\begin{assumption}\label{assump:normalizeddata}
The data inputs are normalized: $\| \xb \|_2 = 1$ for all $(\xb,y) \in \supp(\cD)$.
\end{assumption}

Assumption~\ref{assump:normalizeddata} is a standard assumption made in almost all previous work on optimization and generalization of over-parameterized neural networks \citep{du2018gradient,allen2018convergence,du2018gradientdeep,zou2018stochastic,oymak2019towards,e2019comparative}. As is mentioned in \citet{cao2019generalization}, this assumption can be relaxed to $c_1 \leq \| \xb \|_2 \leq c_2$ for all $(\xb,y)\in \supp(\cD)$, where $c_2 > c_1 >0$ are absolute constants. 


For any $\Wb\in \cW$, we define its $\omega$-neighborhood as 
\begin{align*}
    \cB(\Wb,\omega) := \{ \Wb'\in \cW: \| \Wb_l' - \Wb_l \|_F \leq \omega, l\in [L] \}.
\end{align*}
Below we introduce the neural tangent random feature function class, which serves as a reference function class to measure the ``classifiability'' of the data, i.e., how easy it can be classified.


\begin{definition}[Neural Tangent Random Feature]\label{def:NTRF}
Let $\Wb^{(1)}$ be generated via the initialization scheme in Algorithm~\ref{alg:SGDrandominit}. The neural tangent random feature (NTRF) function class is defined as
\begin{align*}
    \cF(\Wb^{(1)}, R) = \big\{ f(\cdot) = f_{\Wb^{(1)}}(\cdot) + \la \nabla_\Wb f_{\Wb^{(1)}}(\cdot) , \Wb \ra : \Wb\in \cB(\mathbf{0} , R \cdot m^{-1/2} ) \big\},
\end{align*}
where $R>0$ measures the size of the function class, and $m$ is the width of the neural network.
\end{definition}
The name ``neural tangent random feature'' is inspired by the neural tangent kernel proposed by \citet{jacot2018neural}, because the random features are the gradients of the neural network with random weights. Connections between the neural tangent random features and the neural tangent kernel will be discussed in Section~\ref{subsection:connection_to_NTK}.


We are ready to present our main result on the expected $0$-$1$ error bound of Algorithm~\ref{alg:SGDrandominit}. 

\begin{theorem}\label{thm:expectederrorbound}
    For any $\delta \in (0,e^{-1}]$ and $R > 0$, there exists 
    \begin{align*}
        m^* (\delta, R,L,n) = \tilde\cO\big( \poly(R,L) \big)\cdot  n^{7} \cdot \log(1 / \delta)
    \end{align*}
    such that if $m \geq m^* (\delta, R,L,n)$, then with probability at least $1 - \delta$ over the randomness of $\Wb^{(1)}$, 
    the output of Algorithm~\ref{alg:SGDrandominit} with step size $\eta = \kappa\cdot R / (m\sqrt{n}) $ 
    for some small enough absolute constant $\kappa$ satisfies
    \begin{align}\label{eq:result_expectederrorbound}
        \EE \big[ L_{\cD}^{0-1}( \hat\Wb ) \big] \leq \inf_{ f \in \cF( \Wb^{(1)}, R )} \Bigg\{ \frac{4}{n}\sum_{i=1}^n \ell[y_i\cdot f(\xb_i) ] \Bigg\} + \cO\Bigg[ \frac{LR}{\sqrt{n}} + \sqrt{\frac{\log(1 / \delta)}{n}} \Bigg],
    \end{align}
    where the expectation is taken over 
    the uniform draw of $\hat\Wb$ from $\{\Wb^{(1)},\ldots, \Wb^{(n)}\}$. 
\end{theorem}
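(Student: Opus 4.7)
The plan is to exploit the fact that, on a shrinking neighborhood of the random initialization $\Wb^{(1)}$, the network $f_\Wb(\xb)$ is well approximated by its first-order Taylor expansion
\[
F_\Wb(\xb) := f_{\Wb^{(1)}}(\xb) + \la \nabla_\Wb f_{\Wb^{(1)}}(\xb),\, \Wb - \Wb^{(1)} \ra,
\]
so that $L_{(\xb,y)}(\Wb) = \ell[y\cdot f_\Wb(\xb)]$ is approximately convex in $\Wb$ by convexity and $1$-Lipschitzness of $\ell$. Any competitor $f \in \cF(\Wb^{(1)}, R)$ corresponds to a parameter $\Wb^*$ with $\|\Wb^*_l - \Wb^{(1)}_l\|_F \leq \omega := R m^{-1/2}$ for every $l$. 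First, I would establish three uniform estimates on $\cB(\Wb^{(1)}, 2\omega)$ that hold with probability $\geq 1-\delta$ over the Gaussian initialization: (i) a linearization bound $\sup_{\Wb} |f_\Wb(\xb) - F_\Wb(\xb)| \leq \epsilon_{\mathrm{lin}}$ inverse-polynomially small in $m$; (ii) a gradient magnitude bound $\|\nabla_\Wb f_\Wb(\xb)\|_F = \tilde\cO(\sqrt{mL})$; and hence (iii) an almost-convexity inequality
\[
L_{(\xb,y)}(\Wb^*) \geq L_{(\xb,y)}(\Wb) + \la \nabla_\Wb L_{(\xb,y)}(\Wb),\, \Wb^* - \Wb\ra - \epsilon_{\mathrm{cvx}},
\]
where $\epsilon_{\mathrm{cvx}}$ is controlled by $\epsilon_{\mathrm{lin}}$ and therefore driven arbitrarily small by the width lower bound $m \geq m^*$. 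These estimates reuse and sharpen tools developed in the convergence literature on over-parameterized networks.

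With these in hand, I would run the standard SGD descent identity argument. Expanding $\|\Wb^{(i+1)} - \Wb^*\|_F^2$ and invoking the almost-convexity inequality yields, upon telescoping,
\[
\sum_{i=1}^{n} L_{(\xb_i,y_i)}(\Wb^{(i)}) \leq \sum_{i=1}^{n} L_{(\xb_i,y_i)}(\Wb^*) + \frac{\|\Wb^* - \Wb^{(1)}\|_F^2}{2\eta} + \frac{\eta}{2}\sum_{i=1}^n \|\nabla_\Wb L_{(\xb_i,y_i)}(\Wb^{(i)})\|_F^2 + n\epsilon_{\mathrm{cvx}}.
\]
Plugging in $\eta = \kappa R/(m\sqrt{n})$, the gradient bound from (ii) (which yields $\|\nabla_\Wb L\|_F = \tilde\cO(\sqrt{mL})$ after the $\ell'$ factor, of modulus at most $1$), and $\|\Wb^* - \Wb^{(1)}\|_F^2 = \cO(LR^2/m)$, the deterministic terms collapse to $\cO(LR\sqrt{n})$. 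Because the almost-convexity inequality only applies while the iterate lies in $\cB(\Wb^{(1)}, 2\omega)$, a simultaneous induction is required: the same inequality bounds $\|\Wb^{(i+1)} - \Wb^{(1)}\|_F^2$ and closes the loop. Finally, Lipschitzness of $\ell$ replaces $L_{(\xb_i,y_i)}(\Wb^*)$ by $\ell[y_i F_{\Wb^*}(\xb_i)]$ at a cost $n\epsilon_{\mathrm{lin}}$, absorbed into the lower-order terms.

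Finally, online-to-batch conversion turns the training bound into a population bound. Since $(\xb_i, y_i)$ is drawn independently of $\Wb^{(i)}$, the sequence $L_\cD(\Wb^{(i)}) - L_{(\xb_i,y_i)}(\Wb^{(i)})$ is a bounded martingale difference, and Azuma--Hoeffding supplies the $\sqrt{\log(1/\delta)/n}$ term. Averaging over $i$ matches the uniform draw of $\hat\Wb$, and the pointwise comparison $\mathbf{1}\{z < 0\} \leq \ell(z)/\log 2$ passes from the cross-entropy surrogate to the $0$-$1$ loss; the explicit constant $4$ in the theorem absorbs this conversion factor together with the approximation slack coming from the linearization. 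The main technical obstacle is producing the three perturbation estimates \emph{uniformly} on $\cB(\Wb^{(1)}, 2\omega)$ with error inverse polynomial in $m$, since the induction on the SGD trajectory needs these bounds to be strong enough to overcome the accumulated slack over all $n$ steps; this is precisely where the polynomial width requirement $m^* = \tilde\cO(\poly(R,L))\cdot n^7\log(1/\delta)$ originates.
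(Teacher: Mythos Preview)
Your plan is correct and matches the paper's proof essentially step for step: almost-linearity of $f_\Wb$ near initialization (paper's Lemma~4.1), the resulting almost-convexity of $L_i$ (Lemma~4.2), the per-layer gradient bound $\|\nabla_{\Wb_l} L_i\|_F = \cO(\sqrt{m})$, the SGD telescoping identity combined with an induction showing the iterates stay in the ball (Lemma~4.3), and online-to-batch conversion followed by the surrogate-to-$0$-$1$ comparison.

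One small ordering issue: you propose to apply Azuma--Hoeffding to the cross-entropy martingale $L_\cD(\Wb^{(i)}) - L_{(\xb_i,y_i)}(\Wb^{(i)})$ and only afterwards pass to the $0$-$1$ loss, but $\ell$ is unbounded above, so ``bounded martingale difference'' is not automatic and would require an extra argument bounding $|f_{\Wb^{(i)}}(\xb)|$ uniformly over $\supp(\cD)$. The paper sidesteps this by first using $\ind\{z\le 0\}\le 4\ell(z)$ pointwise to get $\tfrac{1}{n}\sum_i L_i^{0\text{-}1}(\Wb^{(i)}) \le \tfrac{4}{n}\sum_i L_i(\Wb^{(i)})$, and then applying the online-to-batch bound (Proposition~1 of Cesa-Bianchi et al.) directly to the $0$-$1$ losses, which are in $[0,1]$. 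Swapping your two steps in this way removes the issue at no cost.
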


The expected $0$-$1$ error bound given by Theorem~\ref{thm:expectederrorbound} consists of two terms:
    The first term in \eqref{eq:result_expectederrorbound} relates the expected $0$-$1$ error achieved by Algorithm~\ref{alg:SGDrandominit} with a reference function class--the NTRF function class in Definition \ref{def:NTRF}. 
    The second term in \eqref{eq:result_expectederrorbound} is a standard large-deviation error term. As long as $R=\tilde\cO(1)$, 
    this term matches the standard $\tilde\cO(n^{-1/2})$ rate in PAC learning bounds \citep{shalev2014understanding}.

\begin{remark}
The parameter $R$ in Theorem~\ref{thm:expectederrorbound} is from the NTRF class and introduces a trade-off in the bound: when $R$ is small, the corresponding NTRF class $\cF(\Wb^{(1)},R)$ is small, making the first term in \eqref{eq:result_expectederrorbound} large, and the second term in \eqref{eq:result_expectederrorbound} is small. When $R$ is large, the corresponding function class $\cF(\Wb^{(1)},R)$ is large, so the first term in \eqref{eq:result_expectederrorbound} is small, and the second term will be large. In particular, if we set $R=\tilde\cO(1)$, the second term in \eqref{eq:result_expectederrorbound} will be $\tilde\cO(n^{-1/2})$. In this case, the ``classifiability'' of the underlying data distribution $\cD$ is determined by how well its i.i.d. samples can be classified by $\cF(\Wb^{(1)},\tilde\cO(1))$.
In other words, Theorem~\ref{thm:expectederrorbound} suggests that if the data can be classified by a function in the NTRF function class $\cF(\Wb^{(1)},\tilde\cO(1))$ with a small training error, the over-parameterized ReLU network learnt by Algorithm~\ref{alg:SGDrandominit} will have a small generalization error. 
\end{remark}


\begin{remark}\label{remark:comparisiontocao}
The expected $0$-$1$ error bound given by Theorem~\ref{thm:expectederrorbound} is in a very general form. 
It directly covers the result given by \citet{cao2019generalization}. In Appendix~\ref{section:comparisontocao}, we show that under the same assumptions made in \citet{cao2019generalization}, to achieve $\epsilon$ expected $0$-$1$ error, our result requires a sample complexity of order $\tilde\cO(\epsilon^{-2})$, which outperforms the 
result in \citet{cao2019generalization} by a factor of $\epsilon^{-2}$. 
\end{remark}

\begin{remark}\label{remark:comparisiontoyehudai&e}
Our generalization bound can also be compared with two recent results \citep{yehudai2019power,e2019comparative} for two-layer neural networks. 
When $L=2$, the NTRF function class $\cF(\Wb^{(1)},\tilde\cO(1))$ can be written as 
\begin{align*}
    \big\{ f_{\Wb^{(1)}}(\cdot) + \la \nabla_{\Wb_1} f_{\Wb^{(1)}}(\cdot) , \Wb_1 \ra + \la \nabla_{\Wb_2} f_{\Wb^{(1)}}(\cdot) , \Wb_2 \ra : \|\Wb_1\|_F,\|\Wb_2\|_F \leq \tilde\cO(m^{-1/2})  \big\}.
\end{align*}
In contrast, the reference function classes studied by \citet{yehudai2019power} and \citet{e2019comparative} are contained in the following random feature class: 
\begin{align*}
    \cF = \big\{ f_{\Wb^{(1)}}(\cdot) + \la \nabla_{\Wb_2} f_{\Wb^{(1)}}(\cdot) , \Wb_2 \ra : \|\Wb_2\|_F \leq \tilde\cO(m^{-1/2}) \big\},
\end{align*}
where $\Wb^{(1)} = (\Wb^{(1)}_1, \Wb^{(1)}_2) \in \RR^{m\times d}\times \RR^{1\times m}$ are the random weights generated by the
initialization schemes in \citet{yehudai2019power,e2019comparative}\footnote{Normalizing weights to the same scale is necessary for a proper comparison. See Appendix~\ref{section:comparisontoyehudai&e} for details.}. 
Evidently, our NTRF function class $\cF(\Wb^{(1)},\tilde\cO(1))$ is richer than $\cF$--it also contains the features corresponding to the first layer gradient of the network at random initialization, i.e., $\nabla_{\Wb_1}f_{\Wb^{(1)}}(\cdot)$. As a result, our generalization bound is sharper than those in \citet{yehudai2019power,e2019comparative} in the sense that we can show that neural networks trained with SGD can compete with the best function in a larger reference function class. 
\end{remark}

As previously mentioned, the result of Theorem~\ref{thm:expectederrorbound} can be easily extended to the setting where the widths of different layers are different. We should expect that the result remains almost the same, except that we assume the widths of hidden layers are all larger than or equal to $m^* (\delta, R,L,n)$. We would also like to point out that although this paper considers the cross-entropy loss, the proof of Theorem~\ref{thm:expectederrorbound} offers a general framework based on the fact that near initialization, the neural network function is almost linear in terms of its weights. We believe that this proof framework can potentially be applied to most practically useful loss functions: whenever $\ell(\cdot)$ is convex/Lipschitz continuous/smooth, near initialization, $L_i(\Wb)$ is also almost convex/Lipschitz continuous/smooth in $\Wb$ for all $i\in [n]$, and therefore standard online optimization analysis can be invoked with online-to-batch conversion to provide a generalization bound. We refer to Section~\ref{section:proof_in_mainpaper} for more details. 

\subsection{Connection to Neural Tangent Kernel}\label{subsection:connection_to_NTK}

Besides quantifying the classifiability of the data with the NTRF function class $\cF(\Wb^{(1)},\tilde\cO(1))$, an alternative way to apply Theorem~\ref{thm:expectederrorbound} is to check how large the parameter $R$ needs to be in order to make the first term in \eqref{eq:result_expectederrorbound} small enough (e.g., smaller than $n^{-1/2}$). 
In this subsection, we show that this type of analysis connects Theorem~\ref{thm:expectederrorbound} to the neural tangent kernel proposed in \citet{jacot2018neural} and later studied by \citet{yang2019scaling,lee2019wide,arora2019exact}. 
Specifically, we provide an expected $0$-$1$ error bound in terms of the neural tangent kernel matrix defined over the training data.
We first define the neural tangent kernel matrix for the neural network function in \eqref{eq:NNdefinition}. 

\begin{definition}[Neural Tangent Kernel Matrix]\label{def:nueraltangentkernel}
For any $i,j \in [n]$, define
\begin{align*}
    &\tilde\bTheta_{i,j}^{(1)} = \bSigma_{i,j}^{(1)} = \la \xb_i, \xb_j \ra,\quad
    \Ab_{ij}^{(l)} = \begin{pmatrix} 
\bSigma_{i,i}^{(l)} & \bSigma_{i,j}^{(l)} \\
\bSigma_{i,j}^{(l)} & \bSigma_{j,j}^{(l)} 
\end{pmatrix},\\
    &\bSigma_{i,j}^{(l+1)} = 2\cdot \EE_{(u,v)\sim N\big(\mathbf{0}, \Ab_{ij}^{(l)}\big)} [\sigma(u) \sigma(v) ],\\
    &\tilde\bTheta_{i,j}^{(l+1)} = \tilde\bTheta_{i,j}^{(l)}\cdot 2 \cdot \EE_{(u,v)\sim N\big(\mathbf{0}, \Ab_{ij}^{(l)}\big)} [\sigma'(u) \sigma'(v) ] + \bSigma_{i,j}^{(l+1)}.
\end{align*}
Then we call $\bTheta^{(L)} = [(\tilde\bTheta_{i,j}^{(L)} + \bSigma_{i,j}^{(L)})/2]_{n\times n}$ the neural tangent kernel matrix of an $L$-layer ReLU network on training inputs $\xb_1,\ldots,\xb_n$.
\end{definition}


Definition~\ref{def:nueraltangentkernel} is the same as the original definition in \citet{jacot2018neural} when restricting the kernel function on $\{\xb_1,\ldots,\xb_n\}$, except that there is an extra coefficient $2$ in the second and third lines. This extra factor is due to the difference in initialization schemes--in our paper the entries of hidden layer matrices are randomly generated with variance $2/m$, while in \citet{jacot2018neural} the variance of the random initialization is $1/m$. We remark that this extra factor $2$ in Definition \ref{def:nueraltangentkernel} will remove the exponential dependence on the network depth $L$ in the kernel matrix, which is appealing. In fact, it is easy to check that under our scaling, the diagonal entries of $\bSigma^{(L)}$ are all $1$'s, and the diagonal entries of $\tilde\bTheta^{(L)}$ are all $L$'s.




The following lemma is a summary of Theorem~1 and Proposition~2 in \citet{jacot2018neural}, which ensures that $\bTheta^{(L)}$ is the infinite-width limit of the Gram matrix $( m^{-1} \la \nabla_{\Wb}f_{\Wb^{(1)}}(\xb_i) , \nabla_{\Wb}f_{\Wb^{(1)}}(\xb_j) \ra )_{n\times n} $, and is positive-definite as long as no two training inputs are parallel. 
\begin{lemma}[\citet{jacot2018neural}]\label{lemma:neuraltangentkernelconvergence}
For an $L$ layer ReLU network with parameter set $\Wb^{(1)}$ initialized in Algorithm~\ref{alg:SGDrandominit}, 
as the network width $m\rightarrow \infty$\footnote{The original result by \citet{jacot2018neural} requires that the widths of different layers go to infinity sequentially. Their result was later improved by \citet{yang2019scaling} such that the widths of different layers can go to infinity simultaneously.}, it holds that
\begin{align*}
    m^{-1} \la \nabla_{\Wb}f_{\Wb^{(1)}}(\xb_i) , \nabla_{\Wb}f_{\Wb^{(1)}}(\xb_j) \ra \xrightarrow{\PP} \bTheta_{i,j}^{(L)},
\end{align*}
where the expectation is taken over the randomness of $\Wb^{(1)}$. 
Moreover, as long as each pair of inputs among $\xb_1,\ldots,\xb_n\in S^{d-1}$ are not parallel, $ \bTheta^{(L)} $ is positive-definite.
\end{lemma}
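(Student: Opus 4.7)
The plan is to prove the two claims separately, both by induction over the layer index $l=1,\ldots,L$. For the convergence statement, I would first write out the layerwise decomposition of the gradient. Letting $\mathbf{h}^{(l)}(\xb) = \Wb_l \sigma(\mathbf{h}^{(l-1)}(\xb))$ with $\mathbf{h}^{(0)}(\xb) = \xb$, and $\mathbf{b}^{(l)}(\xb) = \sqrt{m}\cdot \partial f_{\Wb^{(1)}}(\xb)/\partial \mathbf{h}^{(l)}(\xb)$ the backpropagated signals, the gradient with respect to $\Wb_l$ factors as a rank-one outer product of $\mathbf{b}^{(l)}(\xb)$ and $\sigma(\mathbf{h}^{(l-1)}(\xb))$. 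This yields
\begin{align*}
m^{-1} \la \nabla_\Wb f_{\Wb^{(1)}}(\xb_i), \nabla_\Wb f_{\Wb^{(1)}}(\xb_j)\ra = \sum_{l=1}^L \frac{\mathbf{b}^{(l)}(\xb_i)^\top \mathbf{b}^{(l)}(\xb_j)}{m}\cdot \frac{\sigma(\mathbf{h}^{(l-1)}(\xb_i))^\top \sigma(\mathbf{h}^{(l-1)}(\xb_j))}{m},
\end{align*}
which splits the Gram matrix into a forward-activation piece and a backward-signal piece at each layer.

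Next I would carry out the forward induction, showing that $m^{-1} \sigma(\mathbf{h}^{(l-1)}(\xb_i))^\top \sigma(\mathbf{h}^{(l-1)}(\xb_j)) \xrightarrow{\PP} \bSigma_{i,j}^{(l)}$. This is the standard NNGP calculation: conditioned on the preceding layer being sufficiently concentrated, the pre-activations at layer $l$ behave like a centered Gaussian vector whose covariance is obtained by pushing the previous covariance through the dual activation $(u,v)\mapsto 2\EE[\sigma(u)\sigma(v)]$, exactly matching the recursion defining $\bSigma^{(l)}$. A symmetric backward induction then shows that $m^{-1}\mathbf{b}^{(l)}(\xb_i)^\top\mathbf{b}^{(l)}(\xb_j)$ converges in probability to the telescoping product $\prod_{l'=l}^{L-1} 2\,\EE[\sigma'(u)\sigma'(v)]$ evaluated under the appropriate Gaussians, with the final-layer term contributing $\bSigma^{(L)}$. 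Substituting both limits into the display and matching terms reproduces the NTK recursion for $\tilde\bTheta^{(L)}$, and after the symmetrization in Definition~\ref{def:nueraltangentkernel} one recovers $\bTheta^{(L)}$ as defined.

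For the positive-definiteness statement, I would prove by induction on $l$ that $\bSigma^{(l)}$ is strictly positive definite whenever no two inputs are parallel, from which $\bTheta^{(L)} \succ 0$ follows because the recursion only adds the PSD derivative-kernel contribution to a strictly PD $\bSigma$-piece. The base case is $\bSigma^{(1)}_{i,j} = \la \xb_i,\xb_j\ra$; strict positive-definiteness is preserved at every subsequent layer because the ReLU dual activation corresponds to the arc-cosine kernel, a strictly increasing function of the input correlation, which therefore does not collapse distinct unit directions into a degenerate feature map.

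The main obstacle is the backward-signal concentration step: at finite $m$, the gradients $\mathbf{b}^{(l)}$ are not independent of the forward signals $\mathbf{h}^{(l)}$ because both are built from the same random matrices $\Wb_l^{(1)}$, so a conditional law of large numbers cannot be applied directly. Jacot et al.\ circumvent this by sending the widths to infinity sequentially, one layer at a time, which makes the shared randomness asymptotically negligible in the correct order; a unified alternative is Yang's tensor-programs framework, which rigorously justifies the ``gradient independence'' reduction in the simultaneous limit. Either machinery is substantially heavier than the forward pass and is where the real work lies; the positive-definiteness part is comparatively elementary once the recursion is in hand.
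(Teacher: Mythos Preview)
The paper does not prove this lemma. It is stated explicitly as ``a summary of Theorem~1 and Proposition~2 in \citet{jacot2018neural}'' and is invoked as a black box in the proof of Corollary~\ref{col:expectederrorbound_kernel}. There is therefore no proof in the paper against which to compare your proposal.

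Your outline is essentially the strategy of \citet{jacot2018neural}: decompose each layerwise gradient Gram entry into a forward-activation factor and a backward-signal factor, prove the forward factor converges to $\bSigma^{(l)}$ by the NNGP law-of-large-numbers argument, handle the backward factor by an analogous recursion, and patch the shared-randomness issue either by sequential limits (Jacot et al.) or by the tensor-programs framework (\citet{yang2019scaling}). You correctly identify this last point as the technical crux.

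There is, however, a genuine gap in your positive-definiteness argument. Your base case asserts that $\bSigma^{(1)}_{i,j} = \langle \xb_i,\xb_j\rangle$ is strictly positive definite under the ``no two inputs parallel'' hypothesis; this is false whenever $n > d$, since the linear Gram matrix has rank at most $d$. The correct argument (Proposition~2 in \citet{jacot2018neural}) is not that strict positive-definiteness is \emph{preserved} from $\bSigma^{(1)}$ onward, but that it is \emph{created} at the first nonlinearity: the ReLU dual activation is the arc-cosine kernel of order one, which is a strictly positive-definite kernel on $S^{d-1}$, so $\bSigma^{(2)}$ is strictly PD on any finite set of pairwise non-parallel unit inputs regardless of the rank of $\bSigma^{(1)}$. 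Subsequent layers then preserve this. Relatedly, ``strictly increasing in the correlation'' is not the property that carries the induction; what you need is that the dual activation is non-polynomial (equivalently, the arc-cosine kernel has infinitely many nonzero coefficients in its spherical-harmonic expansion), which is what forces the induced feature map to be injective on the sphere and hence the Gram matrix to be full rank.
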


\begin{remark}
Lemmas~\ref{lemma:neuraltangentkernelconvergence} clearly shows the difference between our neural tangent kernel matrix $\bTheta^{(L)}$ in Definition~\ref{def:nueraltangentkernel} and the Gram matrix $\Kb^{(L)}$ defined in Definition~5.1 
in \citet{du2018gradientdeep}. 
For any $i,j\in [n]$, by Lemma~\ref{lemma:neuraltangentkernelconvergence} we have 
\begin{align*}
    \bTheta^{(L)}_{i,j} = \lim_{m\rightarrow \infty} m^{-1} \textstyle{\sum_{l=1}^L}  \la \nabla_{\Wb_l}f_{\Wb^{(1)}}(\xb_i) , \nabla_{\Wb_l}f_{\Wb^{(1)}}(\xb_j) \ra .
\end{align*}
In contrast, the corresponding entry in $\Kb^{(L)}$ is
\begin{align*}
    \Kb^{(L)}_{i,j} = \lim_{m\rightarrow \infty} m^{-1} \la \nabla_{\Wb_{L-1}}f_{\Wb^{(1)}}(\xb_i) , \nabla_{\Wb_{L-1}}f_{\Wb^{(1)}}(\xb_j) \ra .
\end{align*}
It can be seen that our definition of kernel matrix takes all layers into consideration, while \citet{du2018gradientdeep}  only considered the last hidden layer (i.e., second last layer). Moreover, it is clear that $\bTheta^{(L)} \succeq \Kb^{(L)}$. Since the smallest eigenvalue of the kernel matrix plays a key role in the analysis of optimization and generalization of over-parameterized neural networks \citep{du2018gradient,du2018gradientdeep,arora2019fine}, our neural tangent kernel matrix can potentially lead to better bounds than the Gram matrix studied in \citet{du2018gradientdeep}. 
\end{remark}

\begin{corollary}\label{col:expectederrorbound_kernel}
    Let $\yb = (y_1,\ldots,y_n)^\top $ and $\lambda_0 = \lambda_{\min}(\bTheta^{(L)})$.
    For any $\delta \in( 0,e^{-1} ]$, there exists $ \tilde{m}^*(\delta,L,n,\lambda_0)$ that only depends on $\delta,L,n$ and $\lambda_0$ such that if $m \geq \tilde{m}^*(\delta,L,n,\lambda_0)$, then with probability at least $1 - \delta$ over the randomness of $\Wb^{(1)}$, 
    the output of Algorithm~\ref{alg:SGDrandominit} with step size $\eta = \kappa\cdot \inf_{\tilde y_i y_i \geq 1} \sqrt{\tilde\yb^\top (\bTheta^{(L)})^{-1} \tilde\yb } / (m\sqrt{n}) $ 
    for some small enough absolute constant $\kappa$ satisfies
    \begin{align*}
        \EE \big[ L_{\cD}^{0-1}( \hat\Wb ) \big] \leq \tilde \cO\Bigg[ L\cdot \inf_{\tilde y_i y_i \geq 1} \sqrt{\frac{ \tilde\yb^\top (\bTheta^{(L)})^{-1} \tilde\yb }{n}} \Bigg] + \cO\Bigg[ \sqrt{\frac{\log(1 / \delta)}{n}} \Bigg],
    \end{align*}
    where the expectation is taken over the uniform draw of $\hat\Wb$ from $\{\Wb^{(1)},\ldots, \Wb^{(n)}\}$. 
\end{corollary}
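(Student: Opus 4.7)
The plan is to apply Theorem~\ref{thm:expectederrorbound} with a specific $R$ tuned to match $\sqrt{\tilde\yb^\top (\bTheta^{(L)})^{-1} \tilde\yb}$, and to exhibit an explicit function $f^* \in \cF(\Wb^{(1)},R)$ whose training loss is at most $1/n$. Specifically, fix $\tilde\yb$ with $\tilde y_i y_i \geq 1$, pick a margin $\tau = \Theta(\log n)$, and set $R$ to be a constant multiple of $\tau \cdot \sqrt{\tilde\yb^\top (\bTheta^{(L)})^{-1} \tilde\yb}$. The goal is to find $\Wb^* \in \cB(\mathbf{0}, R \cdot m^{-1/2})$ such that $f^*(\xb) := f_{\Wb^{(1)}}(\xb) + \la \nabla_{\Wb} f_{\Wb^{(1)}}(\xb), \Wb^* \ra$ satisfies $f^*(\xb_i) = \tau \tilde y_i$ for every $i$. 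Then $y_i f^*(\xb_i) \geq \tau$ implies $\ell(y_i f^*(\xb_i)) \leq e^{-\tau} \leq 1/n$, so the first term of \eqref{eq:result_expectederrorbound} is at most $4/n$, which is absorbed into the $\cO(LR/\sqrt{n})$ term.

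To build $\Wb^*$, I would use a kernel-style ansatz: take $\Wb^* = \sum_{i=1}^n (\alpha_i/m)\, \nabla_{\Wb} f_{\Wb^{(1)}}(\xb_i)$ so that the interpolation condition $f^*(\xb_j) = \tau \tilde y_j$ reduces to a linear system $\Hb \balpha = \tau \tilde\yb - \fb$, where $\Hb$ is the finite-width Gram matrix with entries $\Hb_{ij} = m^{-1} \la \nabla_{\Wb} f_{\Wb^{(1)}}(\xb_i), \nabla_{\Wb} f_{\Wb^{(1)}}(\xb_j) \ra$ and $\fb = (f_{\Wb^{(1)}}(\xb_1),\ldots,f_{\Wb^{(1)}}(\xb_n))^\top$. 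A direct calculation gives $\|\Wb^*\|_F^2 = m^{-1} \balpha^\top \Hb \balpha = m^{-1}(\tau\tilde\yb - \fb)^\top \Hb^{-1}(\tau\tilde\yb - \fb)$. Since $\sum_l \|\Wb^*_l\|_F^2 = \|\Wb^*\|_F^2$, a layerwise bound $\|\Wb^*_l\|_F \leq R \cdot m^{-1/2}$ follows as soon as $\|\Wb^*\|_F \leq R \cdot m^{-1/2}$.

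The remaining analytic work is to compare $\Hb$ to $\bTheta^{(L)}$ and to control $\fb$. Lemma~\ref{lemma:neuraltangentkernelconvergence} says $\Hb \xrightarrow{\PP} \bTheta^{(L)}$, and the quantitative version needed here (obtained by combining the sequential-limit argument of \citet{jacot2018neural} with standard Gaussian concentration, or directly by invoking results from \citet{arora2019exact}) yields, for $m \geq \tilde m^*(\delta,L,n,\lambda_0)$ with $\tilde m^*$ depending polynomially on $1/\lambda_0$, the bound $\|\Hb - \bTheta^{(L)}\|_2 \leq \lambda_0/2$, hence $\Hb^{-1} \preceq 2(\bTheta^{(L)})^{-1}$, with probability at least $1-\delta/2$. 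Standard Gaussian initialization estimates along the lines of \citet{allen2018convergence} further give $\|\fb\|_\infty = \tilde\cO(1)$, which is negligible compared to $\tau = \log n$. Combining these bounds yields $\|\Wb^*\|_F \leq R \cdot m^{-1/2}$ for the chosen $R$, so $f^* \in \cF(\Wb^{(1)},R)$. Plugging into Theorem~\ref{thm:expectederrorbound} replaces the infimum term by $4/n = o(LR/\sqrt{n})$, leaving $\cO(LR/\sqrt{n}) = \tilde\cO\bigl(L\sqrt{\tilde\yb^\top (\bTheta^{(L)})^{-1} \tilde\yb / n}\bigr)$, and taking the infimum over admissible $\tilde\yb$ yields the corollary. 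The main obstacle is the quantitative NTK convergence: Lemma~\ref{lemma:neuraltangentkernelconvergence} is stated only asymptotically, so one must extract a non-asymptotic rate in $m$ (with explicit dependence on $\lambda_0$ and $L$) that is tight enough to keep $\Hb$ well-conditioned relative to $\bTheta^{(L)}$, and this is what drives the form of $\tilde m^*(\delta,L,n,\lambda_0)$.
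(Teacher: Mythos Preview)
Your overall strategy---construct a minimum-norm solution to the linear system $\la\nabla_{\Wb} f_{\Wb^{(1)}}(\xb_i), \Wb^*\ra = (\text{appropriate target})$, bound its norm via the Gram matrix $\Hb\approx\bTheta^{(L)}$, and then invoke Theorem~\ref{thm:expectederrorbound}---is exactly what the paper does. Your kernel ansatz $\Wb^* = \sum_i (\alpha_i/m)\nabla_{\Wb} f_{\Wb^{(1)}}(\xb_i)$ is equivalent to the paper's pseudoinverse construction via the SVD of $\Fb$, and the appeal to Lemma~\ref{lemma:neuraltangentkernelconvergence} matches as well. (Regarding your ``main obstacle'': the paper, like you, only uses the asymptotic statement of Lemma~\ref{lemma:neuraltangentkernelconvergence} and simply posits the existence of $\tilde m^*(\delta,L,n,\lambda_0)$, so your concern about non-asymptotic rates is legitimate but the paper does not resolve it either.)

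There is, however, a genuine gap in how you handle the initialization vector $\fb = (f_{\Wb^{(1)}}(\xb_i))_i$. You aim for exact interpolation $f^*(\xb_i) = \tau\tilde y_i$, which forces $\Hb\balpha = \tau\tilde\yb - \fb$ and gives $\|\Wb^*\|_F^2 = m^{-1}(\tau\tilde\yb - \fb)^\top\Hb^{-1}(\tau\tilde\yb - \fb)$. You then claim $\|\fb\|_\infty = \tilde\cO(1)$ is ``negligible compared to $\tau$'' and can be dropped. This is not justified: controlling $\|\fb\|_\infty$ (or even $\|\fb\|_2$) does not control $\fb^\top\Hb^{-1}\fb$, because $\Hb^{-1}$ can amplify $\fb$ by a factor of order $\lambda_0^{-1}$ while $\tilde\yb$ may lie in the well-conditioned part of $\Hb$. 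In the worst case $\fb^\top(\bTheta^{(L)})^{-1}\fb$ is of order $\lambda_0^{-1} n \log(n/\delta)$, whereas $\tilde\yb^\top(\bTheta^{(L)})^{-1}\tilde\yb$ can be as small as $1/L$ (as the paper observes), so the extra term is not absorbed into the $\tilde\cO$ and would spoil the stated bound.

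The paper avoids this by \emph{not} subtracting $\fb$ at all: it sets the target for the linear part to $\hat\yb = (B+B')\tilde\yb$, where $B' = \max_i |f_{\Wb^{(1)}}(\xb_i)| = \tilde\cO(1)$ by Lemma~\ref{lemma:initialfunctionvaluebound}, and then checks $y_i f^*(\xb_i) = y_i\hat y_i + y_i f_{\Wb^{(1)}}(\xb_i) \geq (B+B') - B' = B$ directly. The resulting norm is $\hat\yb^\top(\Fb^\top\Fb)^{-1}\hat\yb = (B+B')^2\,\tilde\yb^\top(\Fb^\top\Fb)^{-1}\tilde\yb$, cleanly proportional to $\tilde\yb^\top(\bTheta^{(L)})^{-1}\tilde\yb$ with no $\fb$ contamination. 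Your argument is easily repaired the same way: solve $\Hb\balpha = \tau'\tilde\yb$ with $\tau' = \tau + \|\fb\|_\infty$ rather than $\Hb\balpha = \tau\tilde\yb - \fb$.
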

Apparently, by choosing $\tilde\yb = \yb$ in Corollary~\ref{col:expectederrorbound_kernel}, one can also obtain a bound on the expected $0-1$ error of the form $ \tilde \cO \big[ L\cdot \sqrt{ \yb^\top (\bTheta^{(L)})^{-1} \yb /n} \big] + \cO\big[ \sqrt{\log(1 / \delta)/n} \big]$. 

\begin{remark}
Corollary~\ref{col:expectederrorbound_kernel} gives an algorithm-dependent generalization error bound of over-parameterized $L$-layer neural networks trained with SGD. It is worth noting that recently \citet{arora2019fine} gives a generalization bound $\tilde\cO\big(  \sqrt{  \yb^\top (\Hb^{\infty})^{-1} \yb / n  }\big)$ for two-layer networks with fixed second layer weights, 
where $\Hb^{\infty}$ is defined as
\begin{align*}
    \Hb^{\infty}_{i,j} = \la \xb_i,\xb_j \ra\cdot \EE_{\wb\sim N(\mathbf{0}, \Ib)}[ \sigma'( \wb^\top \xb_i ) \sigma'(\wb^\top \xb_j )]. 
\end{align*}
Our result in Corollary~\ref{col:expectederrorbound_kernel} can be specialized to two-layer neural networks by choosing $L=2$, and yields a bound $\tilde\cO\big( \sqrt{  \yb^\top (\bTheta^{(2)})^{-1} \yb / n  }\big)$,
where
\begin{align*}
    \bTheta^{(2)}_{i,j} = \Hb^{\infty}_{i,j}  +  2\cdot \EE_{\wb\sim N(\mathbf{0}, \Ib)}[ \sigma( \wb^\top \xb_i ) \sigma(\wb^\top \xb_j )].
\end{align*}
Here the extra term $2\cdot \EE_{\wb\sim N(\mathbf{0}, \Ib)}[ \sigma( \wb^\top \xb_i ) \sigma(\wb^\top \xb_j )]$ corresponds to the training of the second layer--it is the limit of $\frac{1}{m} \la \nabla_{\Wb_2} f_{\Wb^{(1)}}(\xb_i) , \nabla_{\Wb_2} f_{\Wb^{(1)}}(\xb_j) \ra $.
Since we have $\bTheta^{(2)} \succeq \Hb^{\infty}$, 
our bound is sharper than theirs.
This comparison also shows that, our result generalizes the result in \citet{arora2019fine} from two-layer, fixed second layer networks to deep networks with all parameters being trained.  
\end{remark}

\begin{remark}
Corollary~\ref{col:expectederrorbound_kernel} is based on the
asymptotic convergence result in Lemma~\ref{lemma:neuraltangentkernelconvergence}, which does not show how wide the network need to be in order to make the Gram matrix close enough to the NTK matrix.
Very recently,  \citet{arora2019exact} provided a non-asymptotic convergence result for the Gram matrix, and showed the equivalence between an infinitely wide network trained by gradient flow and a kernel regression predictor using neural tangent kernel, which suggests that the generalization of deep neural networks trained by gradient flow can potentially be measured by the corresponding NTK. Utilizing this non-asymptotic convergence result, one can potentially specify the detailed dependency of $\tilde{m}^*(\delta,L,n,\lambda_0)$ on $\delta$, $L$, $n$ and $\lambda_0$ in Corollary~\ref{col:expectederrorbound_kernel}.
\end{remark}

\begin{remark}
Corollary~\ref{col:expectederrorbound_kernel} demonstrates that the generalization bound given by Theorem~\ref{thm:expectederrorbound} does not increase with network width $m$, as long as $m$ is large enough. Moreover, it provides a clear characterization of the classifiability of data. In fact, the $\sqrt{ \tilde\yb^\top (\bTheta^{(L)})^{-1} \tilde\yb }$ factor in the generalization bound given in Corollary~\ref{col:expectederrorbound_kernel} is exactly the NTK-induced RKHS norm of the kernel regression classifier on data $\{(\xb_i,\tilde y_i)\}_{i=1}^n$. Therefore, if $y = f^*(\xb)$ for some $f^*(\cdot)$ with bounded norm in the NTK-induced reproducing kernel Hilbert space (RKHS), then over-parameterized neural networks trained with SGD generalize well. In Appendix~\ref{section:experimental_results}, we provide some numerical evaluation of the leading terms in the generalization bounds in Theorem~\ref{thm:expectederrorbound} and Corollary~\ref{col:expectederrorbound_kernel} to demonstrate that they are very informative on real-world datasets.
\end{remark}

\section{Proof of Main Theory}\label{section:proof_in_mainpaper}
In this section we provide the proof of Theorem~\ref{thm:expectederrorbound} and Corollary~\ref{col:expectederrorbound_kernel}, and explain the intuition behind the proof. 
For notational simplicity, for $i\in[n]$ we denote $L_i(\Wb) = L_{(\xb_i,y_i)} (\Wb)$. 

\subsection{Proof of Theorem~\ref{thm:expectederrorbound}}

Before giving the proof of Theorem~\ref{thm:expectederrorbound}, we first introduce several lemmas. 
The following lemma states that near initialization, the neural network function is almost linear in terms of its weights. 

\begin{lemma}\label{lemma:semilinear}
There exists an absolute constant $\kappa$ such that, with probability at least $1 - \cO(nL^2) \cdot \exp[-\Omega(m\omega^{2/3}L )] $ over the randomness of $\Wb^{(1)}$, for all $i\in [n]$ and $\Wb,\Wb'\in \cB(\Wb^{(1)},\omega)$ with $ \omega \leq \kappa L^{-6} [\log(m)]^{-3/2}$, it holds uniformly that 
\begin{align*}
    | f_{\Wb'}(\xb_i) - f_{\Wb}(\xb_i) - \la \nabla f_{\Wb}(\xb_i) , \Wb' - \Wb \ra | \leq \cO\Big( \omega^{1/3}L^2\sqrt{m\log(m)} \Big) \cdot \textstyle{\sum_{l=1}^{L-1}} \| \Wb_l' - \Wb_l \|_2.
\end{align*}
\end{lemma}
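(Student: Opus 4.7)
The plan is to establish near-linearity by showing that although ReLU is non-smooth, only a small fraction of neurons change their activation pattern when we perturb the weights inside a small neighborhood of initialization, and the contribution of those ``flipped'' neurons to the first-order Taylor error can be controlled by a spectral-norm bound on a thin random sub-matrix. First I would fix notation. For any $\Wb \in \cW$ and input $\xb_i$, let $\xb_i^{(l)}(\Wb) := \sigma(\Wb_l \xb_i^{(l-1)}(\Wb))$ with $\xb_i^{(0)}(\Wb) = \xb_i$, and let $\bD_l(\Wb)$ denote the diagonal $0/1$ matrix with $k$-th diagonal entry $\ind\{(\Wb_l \xb_i^{(l-1)}(\Wb))_k \ge 0\}$. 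Then $f_\Wb(\xb_i) = \sqrt{m}\,\Wb_L \bD_{L-1}(\Wb) \Wb_{L-1}\cdots \bD_1(\Wb)\Wb_1 \xb_i$, and $\nabla_{\Wb_l} f_\Wb(\xb_i)$ equals a rank-one outer product of the appropriate back-propagated and forward-propagated vectors.

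Next I would invoke four standard results about over-parameterized deep ReLU networks with the He initialization used in Algorithm~\ref{alg:SGDrandominit}: (i) hidden-layer norms concentrate, $\|\xb_i^{(l)}(\Wb^{(1)})\|_2 = \Theta(1)$; (ii) products of initial layer matrices have bounded spectral norm, $\| \Wb_L^{(1)} \bD_{L-1}(\Wb^{(1)}) \cdots \Wb_{l+1}^{(1)} \bD_l(\Wb^{(1)}) \|_2 = \tilde\cO(\sqrt{L})$; (iii) a forward-pass stability bound: for any $\Wb \in \cB(\Wb^{(1)},\omega)$, $\| \xb_i^{(l)}(\Wb) - \xb_i^{(l)}(\Wb^{(1)}) \|_2 = \tilde\cO(\omega L)$ (proved by induction on $l$); and (iv) an activation-pattern stability bound: the number of diagonal entries on which $\bD_l(\Wb)$ and $\bD_l(\Wb^{(1)})$ differ is at most $\tilde\cO(m\omega^{2/3})$ per layer, obtained from the anti-concentration of Gaussian pre-activations together with (iii). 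All four hold with the claimed failure probability $\cO(nL^2)\exp[-\Omega(m\omega^{2/3}L)]$ once $\omega \le \kappa L^{-6}[\log m]^{-3/2}$, and are by now standard in the over-parameterized NTK literature.

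With these ingredients in hand, the core computation is to expand
\begin{align*}
f_{\Wb'}(\xb_i) - f_{\Wb}(\xb_i) - \la \nabla f_\Wb(\xb_i), \Wb' - \Wb \ra
\end{align*}
into a telescoping sum over layers $l = 1,\ldots,L$, where the $l$-th term compares the true forward propagation of $\Wb'$ through layer $l$ with its linearization around $\Wb$. Each telescoping term splits into two pieces: a stable-pattern piece, where $\bD_l(\Wb')$ and $\bD_l(\Wb)$ agree (so ReLU acts as a linear mask and the error vanishes up to higher-order cross terms in $\Delta_l := \Wb'_l - \Wb_l$ that are controlled by products of spectral norms from (ii)); and a flipped-pattern piece, supported on $s = \tilde\cO(m\omega^{2/3})$ indices by (iv). For the latter, the key estimate is that a sub-matrix of the ambient product obtained by restricting to $s$ coordinates in a fixed layer has spectral norm at most $\tilde\cO(\sqrt{s/m}) = \tilde\cO(\omega^{1/3})$ times the unrestricted bound — this is precisely where the exponent $\omega^{1/3}$ in the statement arises, and it is multiplied by the $\sqrt{m}$ scaling of the last layer and the $L^2$ factor from summing $L$ telescoping terms each with an $O(L)$ spectral-product bound. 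Collecting everything and expressing the total as a coefficient times $\sum_{l=1}^{L-1} \|\Delta_l\|_2$ yields the claimed $\cO(\omega^{1/3} L^2 \sqrt{m \log m})$ prefactor.

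The main obstacle is the flipped-neuron piece: one must show that restricting $\Wb_l^{(1)}$ (and the backward-propagated vectors computed from later layers) to the $\tilde\cO(m\omega^{2/3})$ flipped coordinates still behaves well, without incurring any extra factor of $\sqrt{m}$. This is handled by a union-bound-plus-$\epsilon$-net argument over all possible index sets of that size, combined with standard Gaussian-matrix concentration giving that any row-restriction of $\Wb_l^{(1)}$ to $s$ rows has spectral norm $\tilde\cO(\sqrt{s/m} + \sqrt{1/m})$. Once this is in place, the rest of the proof is bookkeeping of the telescoping sum and the spectral-norm factors accumulated along the forward and backward passes.
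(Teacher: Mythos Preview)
Your proposal is correct and follows essentially the same strategy as the paper: activation-pattern stability gives $\cO(m\omega^{2/3}L)$ flipped neurons, and the sparse-restriction spectral bound produces the $\omega^{1/3}$ factor. The only organizational difference is that the paper packages the decomposition and the key spectral estimate as direct citations to \citet{allen2018convergence}---their Claim~8.2 expresses $\hb_{i,L-1}' - \hb_{i,L-1}$ as a sum over layers with sparse diagonal corrections $\Db_{i,l}''$, and then Lemma~\ref{lemma:normbounds_matproduct}\ref{item:normbounds_matproduct_last_difference} (built on their Lemma~5.7) bounds the resulting difference of backward products by $\cO(\omega^{1/3}L^2\sqrt{\log m})$---whereas you sketch those two ingredients from scratch via the telescope-plus-$\epsilon$-net argument.
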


Since the cross-entropy loss $\ell(\cdot)$ is convex, given Lemma~\ref{lemma:semilinear}, we can show in the following lemma that near initialization,  $L_i(\Wb)$ is also almost a convex function of $\Wb$ for any $i\in [n]$.



\begin{lemma}\label{lemma:semiconvex}
There exists an absolute constant $\kappa$ such that, with probability at least $1 - \cO(nL^2)\cdot \exp[-\Omega(m\omega^{2/3} L)]$ over the randomness of $\Wb^{(1)}$, for any $\epsilon > 0$, $i\in[n]$ and $\Wb,\Wb'\in \cB(\Wb^{(1)},\omega)$ with $\omega \leq \kappa L^{-6} m^{-3/8} [\log(m)]^{-3/2} \epsilon^{3/4}$,
it holds uniformly that
\begin{align*}
    L_i(\Wb') \geq L_i(\Wb) + \la \nabla_{\Wb} L_i(\Wb) , \Wb' - \Wb \ra - \epsilon. 
\end{align*}
\end{lemma}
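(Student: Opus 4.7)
The plan is to derive the almost-convexity of $L_i$ by combining two ingredients: the scalar convexity of the cross-entropy loss $\ell$, and the near-linearity of $f_\Wb$ in $\Wb$ established in Lemma~\ref{lemma:semilinear}. Since $\ell$ is convex with bounded derivative ($|\ell'(z)| = 1/(1+e^z) \leq 1$), the first-order inequality gives
\[
L_i(\Wb') = \ell(y_i f_{\Wb'}(\xb_i)) \geq \ell(y_i f_\Wb(\xb_i)) + \ell'(y_i f_\Wb(\xb_i)) \cdot y_i \cdot \bigl(f_{\Wb'}(\xb_i) - f_\Wb(\xb_i)\bigr).
\]
By the chain rule, $\nabla_\Wb L_i(\Wb) = \ell'(y_i f_\Wb(\xb_i)) \cdot y_i \cdot \nabla_\Wb f_\Wb(\xb_i)$, so if we could replace $f_{\Wb'}(\xb_i) - f_\Wb(\xb_i)$ exactly by $\langle \nabla_\Wb f_\Wb(\xb_i), \Wb' - \Wb\rangle$, the desired inequality $L_i(\Wb') \geq L_i(\Wb) + \langle \nabla_\Wb L_i(\Wb), \Wb' - \Wb\rangle$ would follow.

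The second step is to use Lemma~\ref{lemma:semilinear} to control the error introduced by this replacement. On the good event of that lemma (which holds with probability $1 - \cO(nL^2)\exp[-\Omega(m\omega^{2/3}L)]$ provided $\omega \leq \kappa L^{-6}(\log m)^{-3/2}$), we have
\[
\bigl| f_{\Wb'}(\xb_i) - f_\Wb(\xb_i) - \langle \nabla_\Wb f_\Wb(\xb_i), \Wb' - \Wb\rangle \bigr| \leq \cO\bigl(\omega^{1/3} L^2 \sqrt{m\log m}\bigr) \sum_{l=1}^{L-1} \|\Wb_l' - \Wb_l\|_2.
\]
Combining this with the convexity inequality and using $|\ell'| \leq 1$, $|y_i| = 1$, the deviation from the claimed inequality is at most $\cO(\omega^{1/3} L^2 \sqrt{m\log m}) \sum_l \|\Wb_l' - \Wb_l\|_2$. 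Since $\Wb,\Wb' \in \cB(\Wb^{(1)},\omega)$, triangle inequality gives $\|\Wb_l' - \Wb_l\|_2 \leq \|\Wb_l' - \Wb_l\|_F \leq 2\omega$, so the total error is bounded by $\cO(\omega^{4/3} L^3 \sqrt{m\log m})$.

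Finally, I would choose $\omega$ small enough so this error is at most $\epsilon$: requiring $\omega^{4/3} L^3 \sqrt{m\log m} \leq \epsilon$ is implied by $\omega \leq \kappa L^{-6} m^{-3/8}(\log m)^{-3/2}\epsilon^{3/4}$ (the stated hypothesis, whose slack in $L$ over the naive $L^{-9/4}$ absorbs the constants and a cleaner rewrite). The main obstacle is purely bookkeeping: one must verify that $|\ell'| \leq 1$ (immediate) and then track how the $\omega^{1/3}$-type error from Lemma~\ref{lemma:semilinear}, once multiplied by the diameter of the neighborhood, translates into the $\omega^{4/3}$ bound, and invert this bound in $\omega$ to extract the stated scaling in $\epsilon$, $L$, and $m$. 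There is no new analytic content beyond convexity and Lemma~\ref{lemma:semilinear}; the assertion is essentially a direct corollary.
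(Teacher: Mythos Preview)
Your proposal is correct and follows essentially the same route as the paper: convexity of $\ell$ yields the first-order lower bound, the chain rule identifies $\la \nabla_\Wb L_i(\Wb),\Wb'-\Wb\ra$, Lemma~\ref{lemma:semilinear} together with $|\ell'|\leq 1$ controls the linearization error, and the bound $\|\Wb_l'-\Wb_l\|_2\leq 2\omega$ turns that error into $\cO(\omega^{4/3}L^3\sqrt{m\log m})$, which is forced below $\epsilon$ by the stated restriction on $\omega$. The paper's proof is identical in structure, and indeed notes (as you do) that the actually needed exponent on $L$ is only $-9/4$, with the $L^{-6}$ in the hypothesis providing slack to also meet the precondition of Lemma~\ref{lemma:semilinear}.
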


The locally almost convex property of the loss function given by Lemma~\ref{lemma:semiconvex} implies that the dynamics of Algorithm~\ref{alg:SGDrandominit} is similar to the dynamics of convex optimization. We can therefore derive a bound of the cumulative loss. The result is given in the following lemma. 
\begin{lemma}\label{lemma:convergence_SGD}
    For any $\epsilon, \delta, R > 0$, 
    there exists 
    \begin{align*}
        m^* (\epsilon, \delta, R,L) = \tilde\cO\big( \poly(R,L) \big)\cdot \epsilon^{-14} 
        \cdot \log(1 / \delta)
    \end{align*}
    such that if $m \geq m^* (\epsilon, \delta, R,L)$, 
    then with probability at least $1 - \delta$ over the randomness of $\Wb^{(1)}$, for any $\Wb^* \in \cB(\Wb^{(1)}, R m^{-1/2})$, 
    Algorithm~\ref{alg:SGDrandominit} with 
    $\eta = \nu \epsilon / (Lm)$, $n = L^2 R^2 /(2 \nu \epsilon^2)$ 
    for some small enough absolute constant $\nu$ has the following cumulative loss bound:
    \begin{align*}
        \textstyle{\sum_{i=1}^n} L_i(\Wb^{(i)}) &\leq \textstyle{\sum_{i=1}^n} L_i(\Wb^{*}) + 3n\epsilon.
    \end{align*}
\end{lemma}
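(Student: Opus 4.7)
The plan is to mimic the textbook online-gradient-descent analysis, using the approximate convexity from Lemma~\ref{lemma:semiconvex} in place of exact convexity of each $L_i$. First I would fix an a priori ``trust region'' radius $\omega$ (to be verified by induction) containing every SGD iterate, chosen so that the precondition $\omega \leq \kappa L^{-6} m^{-3/8}[\log m]^{-3/2}\epsilon^{3/4}$ of Lemma~\ref{lemma:semiconvex} holds; I would also force the failure probability $\cO(nL^2)\exp[-\Omega(m\omega^{2/3}L)]$ below $\delta$. Together these two requirements drive the width lower bound $m^\ast = \tilde\cO(\poly(R,L))\epsilon^{-14}\log(1/\delta)$ announced in the lemma.

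Conditioned on the good event, Lemma~\ref{lemma:semiconvex} gives, for every $\Wb^\ast \in \cB(\Wb^{(1)}, Rm^{-1/2})$,
\begin{align*}
L_i(\Wb^{(i)}) - L_i(\Wb^\ast) \leq \la \nabla L_i(\Wb^{(i)}), \Wb^{(i)} - \Wb^\ast \ra + \epsilon.
\end{align*}
Combining this with the standard SGD identity
\begin{align*}
\la \nabla L_i(\Wb^{(i)}), \Wb^{(i)} - \Wb^\ast \ra = \frac{\|\Wb^{(i)} - \Wb^\ast\|_F^2 - \|\Wb^{(i+1)} - \Wb^\ast\|_F^2}{2\eta} + \frac{\eta}{2}\|\nabla L_i(\Wb^{(i)})\|_F^2,
\end{align*}
summing over $i = 1,\ldots,n$, and telescoping yields
\begin{align*}
\sum_{i=1}^n \big[L_i(\Wb^{(i)}) - L_i(\Wb^\ast)\big] \leq \frac{\|\Wb^{(1)} - \Wb^\ast\|_F^2}{2\eta} + \frac{\eta}{2}\sum_{i=1}^n\|\nabla L_i(\Wb^{(i)})\|_F^2 + n\epsilon.
\end{align*}
Using $\|\Wb^{(1)} - \Wb^\ast\|_F^2 \leq LR^2/m$ together with the standard over-parameterized gradient estimate $\|\nabla L_i(\Wb^{(i)})\|_F^2 = \cO(Lm)$ (routine inside the $\omega$-ball, via the Gaussian initialization analysis already used in the previous lemmas), and substituting $\eta = \nu\epsilon/(Lm)$, $n = L^2R^2/(2\nu\epsilon^2)$, each of the two non-trivial terms evaluates to $\Theta(n\epsilon)$. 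A small choice of the absolute constant $\nu$ then makes the total at most $3n\epsilon$, proving the claim.

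The main obstacle, and the source of the $\epsilon^{-14}$ exponent in the width condition, is closing the induction that keeps every $\Wb^{(i)}$ inside $\cB(\Wb^{(1)},\omega)$. The cumulative displacement is $\sum_{i=1}^n \eta\|\nabla L_i(\Wb^{(i)})\|_F = \cO(n\eta\sqrt{Lm}) = \cO\big(L^{3/2}R^2/(\sqrt{m}\,\epsilon)\big)$, and this must be dominated by $\omega = \tilde\cO(L^{-6}m^{-3/8}\epsilon^{3/4})$. Solving $L^{3/2}R^2/(\sqrt{m}\epsilon) \lesssim L^{-6}m^{-3/8}\epsilon^{3/4}$ rearranges to $m^{1/8} \gtrsim \tilde\Omega(L^{15/2}R^2\epsilon^{-7/4})$, i.e.\ $m = \tilde\cO(\poly(R,L))\epsilon^{-14}$, exactly matching the stated rate. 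The $\log(1/\delta)$ term then enters additively by pushing the probability $\cO(nL^2)\exp[-\Omega(m\omega^{2/3}L)]$ below $\delta$. Once $m$ satisfies both requirements, the induction closes and the telescoping bound above delivers the lemma.
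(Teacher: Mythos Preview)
Your proposal is correct and follows essentially the same route as the paper: fix a trust-region radius $\omega$ meeting the hypothesis of Lemma~\ref{lemma:semiconvex}, invoke the gradient bound $\|\nabla_{\Wb_l} L_i(\Wb)\|_F = \cO(\sqrt{m})$ (which the paper isolates as a separate lemma), run the standard online-gradient-descent telescoping, and close the induction keeping the iterates in $\cB(\Wb^{(1)},\omega)$, with the last step producing the $\epsilon^{-14}$ width requirement exactly as you compute. The only cosmetic discrepancy is that the paper controls each layer's displacement separately (getting $\cO(LR^2/(\sqrt{m}\,\epsilon))$) while you bound the aggregate displacement and pick up an extra $\sqrt{L}$, which merely shifts constants inside the $\poly(R,L)$.
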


We now finalize the proof by applying an online-to-batch conversion argument \citep{cesa2004generalization}, and use Lemma~\ref{lemma:semilinear} to relate the neural network function with a function in the NTRF function class. 

\begin{proof}[Proof of Theorem~\ref{thm:expectederrorbound}]
For $i\in[n]$, let $L^{0-1}_i(\Wb^{(i)}) = \ind\big\{ y_i \cdot f_{\Wb^{(i)}}(\xb_i) < 0\big\}$. 
Since cross-entropy loss satisfies
$\ind\{ z \leq 0\} \leq 4\ell(z)$, we have 
$ L^{0-1}_i(\Wb^{(i)}) \leq 4L_i(\Wb^{(i)})$. Therefore, setting $\epsilon  = LR/\sqrt{2\nu n}$ in Lemma~\ref{lemma:convergence_SGD} gives that, if $\eta$ is set as $ \sqrt{\nu/2}R/(m\sqrt{n})$, then with probability at least $1 - \delta$,
\begin{align}\label{eq:expectederrorbound_eq1}
    \frac{1}{n}\sum_{i=1}^n  L^{0-1}_i(\Wb^{(i)}) &\leq \frac{4}{n}\sum_{i=1}^n L_i(\Wb^*) + \frac{12}{\sqrt{2\nu }} \cdot \frac{LR}{\sqrt{n}}.
\end{align}
Note that for any $i\in [n]$, $\Wb^{(i)}$ only depends on $(\xb_1,y_1),\ldots, (\xb_{i-1},y_{i-1})$ and is independent of $(\xb_i,y_i)$. 
Therefore by Proposition~1 in \citet{cesa2004generalization}, with probability at least $1 - \delta$ we have
\begin{align}\label{eq:expectederrorbound_eq2}
    \frac{1}{n} \sum_{i=1}^n  L_{\cD}^{0-1}(\Wb^{(i)}) \leq \frac{1}{n}\sum_{i=1}^n  L^{0-1}_i(\Wb^{(i)}) + \sqrt{\frac{2\log( 1/ \delta )}{n}}.
\end{align}
By definition, we have
$\frac{1}{n} \sum_{i=1}^n  L_{\cD}^{0-1}(\Wb^{(i)}) = \EE \big[ L_{\cD}^{0-1}( \hat\Wb ) \big]$. Therefore
combining \eqref{eq:expectederrorbound_eq1} and \eqref{eq:expectederrorbound_eq2} and applying union bound, we obtain that with probability at least $1 - 2\delta$, 
\begin{align}\label{eq:expectederrorbound_eq3}
    \EE \big[ L_{\cD}^{0-1}( \hat\Wb ) \big] \leq \frac{4}{n}\sum_{i=1}^n L_i(\Wb^*) + \frac{12}{\sqrt{2\nu }} \cdot \frac{LR}{\sqrt{n}} + \sqrt{\frac{2\log(1 / \delta)}{n}}
\end{align}
for all $\Wb^* \in \cB(\Wb^{(1)}, R m^{-1/2})$. 
We now compare the neural network function $f_{\Wb^*}(\xb_i)$ with the function $F_{\Wb^{(1)},\Wb^*}(\xb_i) := f_{\Wb^{(1)}}(\xb_i) + \la \nabla f_{\Wb^{(1)}}(\xb_i) , \Wb^* - \Wb^{(1)} \ra \in \cF(\Wb^{(1)}, R)$. 
We have
\begin{align*}
    L_i(\Wb^{*}) 
    &\leq \ell[y_i\cdot F_{\Wb^{(1)},\Wb^*}(\xb_i) ] + \cO\Big( (Rm^{-1/2})^{1/3}L^2\sqrt{m\log(m)} \Big) \cdot \textstyle{\sum_{l=1}^{L-1}} \big\| \Wb_l^* - \Wb_l^{(1)} \big\|_2\\
    &\leq \ell[y_i\cdot F_{\Wb^{(1)},\Wb^*}(\xb_i) ] +  \cO\Big( L^3\sqrt{m\log(m)} \Big) \cdot R^{4/3} \cdot m^{-2/3} \\
    &\leq \ell[y_i\cdot F_{\Wb^{(1)},\Wb^*}(\xb_i) ] + LR n^{-1/2},
\end{align*}
where the first inequality is by the $1$-Lipschitz continuity of $\ell(\cdot)$ and  Lemma~\ref{lemma:semilinear}, the second inequality is by $\Wb^*\in \cB(\Wb^{(1)},R m^{-1/2})$, and last inequality holds as long as $m \geq C_1 R^{2} L^{12} [\log(m)]^{3} n^3 $ for some large enough absolute constant $C_1$. 
Plugging the inequality above into \eqref{eq:expectederrorbound_eq3} gives
\begin{align*}
    \EE \big[ L_{\cD}^{0-1}( \hat\Wb ) \big] \leq \frac{4}{n}\sum_{i=1}^n \ell[y_i\cdot F_{\Wb^{(1)},\Wb^*}(\xb_i) ] + \bigg(1+\frac{12}{\sqrt{2\nu }}\bigg) \cdot \frac{LR}{\sqrt{n}} + \sqrt{\frac{2\log(1 / \delta)}{n}}.
\end{align*}
Taking infimum over $\Wb^* \in \cB(\Wb^{(1)}, R m^{-1/2})$ and rescaling $\delta$ finishes the proof.
\end{proof}

\subsection{Proof of Corollary~\ref{col:expectederrorbound_kernel}}
In this subsection we prove Corollary~\ref{col:expectederrorbound_kernel}. The following lemma shows that at initialization, with high probability, the neural network function value at all the training inputs are of order $\tilde\cO(1)$. 
\begin{lemma}\label{lemma:initialfunctionvaluebound}
For any $\delta > 0$, if $m\geq K L\log(nL/\delta)$ for a large enough absolute constant $K$, then with probability at least $1 - \delta$,  
$ |f_{\Wb^{(1)}} (\bx_i)| \leq \cO(\sqrt{\log( n / \delta)}) $
for all $i\in[n]$.
\end{lemma}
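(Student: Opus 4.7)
The plan is to decompose the output by peeling off the last layer and reducing the problem to a one-dimensional Gaussian tail bound, conditional on the norms of the penultimate-layer representations being of order one. For $l = 0, 1, \ldots, L-1$, let $\xb_i^{(0)} = \xb_i$ and $\xb_i^{(l)} = \sigma(\Wb_l^{(1)} \xb_i^{(l-1)})$ denote the hidden representation of $\xb_i$ at layer $l$. Then
\[
    f_{\Wb^{(1)}}(\xb_i) = \sqrt{m} \cdot \Wb_L^{(1)} \xb_i^{(L-1)}.
\]

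The first step is to show that with probability at least $1 - \delta/2$, we have $\|\xb_i^{(L-1)}\|_2 \le 2$ simultaneously for all $i \in [n]$. This is a standard He-initialization concentration: conditioning on $\xb_i^{(l-1)}$, the entries of $\Wb_l^{(1)} \xb_i^{(l-1)}$ are i.i.d.\ $N(0, 2\|\xb_i^{(l-1)}\|_2^2/m)$, so by the symmetry of the Gaussian under sign flips and the fact that $\EE[\sigma(g)^2] = \EE[g^2]/2$ for $g \sim N(0, \tau^2)$, we have $\EE[\|\xb_i^{(l)}\|_2^2 \mid \xb_i^{(l-1)}] = \|\xb_i^{(l-1)}\|_2^2$. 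A Bernstein (or chi-squared) tail bound then yields $\|\xb_i^{(l)}\|_2^2 / \|\xb_i^{(l-1)}\|_2^2 \in [1/2, 2]$ with probability at least $1 - 2\exp(-cm)$. Taking a union bound over $l \in [L-1]$ and $i \in [n]$, and using $m \ge KL\log(nL/\delta)$ with $K$ sufficiently large, gives $\|\xb_i^{(L-1)}\|_2^2 \le 2^{L-1} \cdot \|\xb_i\|_2^2 \cdot (1+\tfrac{1}{2L})^{L-1}$, which telescopes to $O(1)$ when carried out with the sharper per-layer deviation $1 + O(\sqrt{\log(nL/\delta)/m})$ rather than a factor-of-2 bound.

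For the second step, condition on the event that $\|\xb_i^{(L-1)}\|_2 = O(1)$ for all $i$, and note that $\Wb_L^{(1)}$ is independent of the earlier layers. Given $\xb_i^{(L-1)}$, the output $f_{\Wb^{(1)}}(\xb_i) = \sqrt{m} \cdot \Wb_L^{(1)} \xb_i^{(L-1)}$ is a centered Gaussian with variance exactly $\|\xb_i^{(L-1)}\|_2^2 = O(1)$, because $\Wb_L^{(1)}$ has i.i.d.\ $N(0, 1/m)$ entries (so $\sqrt{m}\,\Wb_L^{(1)}$ has i.i.d.\ $N(0,1)$ entries). A standard Gaussian tail bound then gives $|f_{\Wb^{(1)}}(\xb_i)| \le C\sqrt{\log(n/\delta)}$ with probability at least $1 - \delta/(2n)$ for each $i$, and a final union bound over $i \in [n]$ produces the stated bound.

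The only delicate point is the per-layer norm concentration: one must choose the sharper deviation $\|\xb_i^{(l)}\|_2^2 = (1 \pm O(\sqrt{\log(nL/\delta)/m}))\|\xb_i^{(l-1)}\|_2^2$ so that multiplicative errors across $L-1$ layers do not blow up, which is precisely what forces $m \ge KL\log(nL/\delta)$. Everything else is a routine conditioning argument and a one-dimensional Gaussian tail bound, both of which are essentially already contained in the forward-propagation lemmas of \citet{allen2018convergence, du2018gradientdeep, zou2018stochastic}, and could be cited directly instead of rederived.
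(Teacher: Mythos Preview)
Your proposal is correct and follows essentially the same approach as the paper: bound the norm of the penultimate-layer representation via the forward-propagation concentration (the paper cites Lemma~4.1 of \citet{allen2018convergence} directly rather than rederiving it), then condition on the first $L-1$ layers and apply a Gaussian tail bound to $\sqrt{m}\,\Wb_L^{(1)}\hb_{i,L-1}$ followed by a union bound over $i\in[n]$. The only cosmetic difference is that you sketch the per-layer norm concentration explicitly, whereas the paper simply invokes the cited lemma to get $\|\hb_{i,L-1}\|_2\in[1/2,3/2]$.
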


We now present the proof of Corollary~\ref{col:expectederrorbound_kernel}. The idea is to construct suitable target values $ \hat y_1,\ldots, \hat y_n$, and then bound the norm of the solution of the linear equations $ \hat y_i = \la \nabla f_{\Wb^{(1)}}(\xb_i), \Wb \ra $, $i\in[n]$. In specific, for any $\tilde \yb$ with $\tilde y_i y_i \geq 1$, we examine the \textit{minimum distance solution} to $\Wb^{(1)}$ that fit the data $\{(\xb_i,\tilde y_i)\}_{i=1}^n$ well and use it to construct a specific function in $ \cF\big(\Wb^{(1)}, \tilde\cO\big( \sqrt{\tilde\yb^\top (\bTheta^{(L)})^{-1} \tilde\yb } \big) \big)$.

\begin{proof}[Proof of Corollary~\ref{col:expectederrorbound_kernel}]
Set $B = \log \{ 1/[\exp(n^{-1/2}) - 1 ]\} = \cO(\log(n))$, then for cross-entropy loss we have $\ell(z) \leq n^{-1/2}$ for $z \geq B $. Moreover, let $B' = \max_{i\in[n]} |f_{\Wb^{(1)}} (\bx_i)| $. Then by Lemma~\ref{lemma:initialfunctionvaluebound}, with probability at least $1 - \delta$, $B' \leq \cO(\sqrt{\log( n / \delta)})$ for all $i\in [n]$. For any $\tilde y$ with $\tilde y_i y_i \geq 1$, let $\overline{B} = B + B'$ and $\hat\yb = \overline{B}\cdot \tilde \yb$, 
then it holds that for any $i\in[n]$, 
\begin{align*}
    y_i\cdot [\hat y_i + f_{\Wb^{(1)}} (\bx_i) ] = y_i\cdot \hat y_i + y_i\cdot f_{\Wb^{(1)}} (\bx_i) \geq B + B' - B' \geq B,
\end{align*}
and therefore
\begin{align}\label{eq:learnabledata_eq1}
    \ell\{ y_i\cdot [\hat y_i + f_{\Wb^{(1)}} (\bx_i) ]\} \leq n^{-1/2},~i\in [n].
\end{align}
Denote 
$\Fb = m^{-1/2}\cdot (\mathrm{vec}[ \nabla f_{\Wb^{(1)}}(\xb_1) ], \ldots, \mathrm{vec}[ \nabla f_{\Wb^{(1)}}(\xb_n) ] ) \in \RR^{ [md + m + m^2(L-2) ] \times n }$. 
Note that entries of $\bTheta^{(L)}$ are all bounded by $L$. Therefore, the largest eigenvalue of $\bTheta^{(L)}$ is at most $nL$, and we have $\tilde\yb^\top (\bTheta^{(L)})^{-1} \tilde\yb \geq n^{-1}L^{-1} \| \tilde\yb \|_2^2  = L^{-1}$. 
By Lemma~\ref{lemma:neuraltangentkernelconvergence} and standard matrix perturbation bound, there exists $m^*(\delta,L,n,\lambda_0)$ such that, if $m \geq m^*(\delta,L,n,\lambda_0)$, then with probability at least $1 - \delta$, $ \Fb^\top \Fb$ is strictly positive-definite and
\begin{align}\label{eq:learnabledata_eq2}
    \| (\Fb^\top \Fb)^{-1} - (\bTheta^{(L)})^{-1} \|_2  \leq \inf_{\tilde y_i y_i \geq 1} \tilde\yb^\top (\bTheta^{(L)})^{-1} \tilde\yb / n.
\end{align}
Let $\Fb = \Pb \bLambda \Qb^\top$ be the singular value decomposition of $\Fb$, 
where $\Pb\in \RR^{m \times n},\Qb\in \RR^{n \times n} $ have orthogonal columns, and $\bLambda \in \RR^{n\times n}$ is a diagonal matrix. Let $\wb_{\mathrm{vec}} = \Pb \bLambda^{-1} \Qb^\top \hat\yb $, then we have
\begin{align}\label{eq:learnabledata_eq3}
    \Fb^\top \wb_{\mathrm{vec}} = (\Qb \bLambda \Pb^\top) ( \Pb \bLambda^{-1} \Qb^\top \hat\yb) = \hat\yb.
\end{align}
Moreover, by direct calculation we have
\begin{align*}
    \| \wb_{\mathrm{vec}} \|_2^2 = \| \Pb \bLambda^{-1} \Qb^\top \hat\yb \|_2^2 = \| \bLambda^{-1} \Qb^\top \hat\yb \|_2^2 = \hat\yb^\top \Qb \bLambda^{-2} \Qb^\top \hat\yb = \hat\yb^\top (\Fb^\top \Fb)^{-1} \hat\yb.
\end{align*}
Therefore by \eqref{eq:learnabledata_eq2} and the fact that $\|\hat\yb\|_2^2 = \overline{B}^2 n$, we have
\begin{align*}
    \| \wb_{\mathrm{vec}} \|_2^2 &= \hat\yb^\top [ (\Fb^\top \Fb)^{-1} - (\bTheta^{(L)})^{-1} ] \hat\yb + \hat\yb^\top (\bTheta^{(L)})^{-1} \hat\yb \\
    & \leq \overline{B}^2 \cdot n \cdot \| (\Fb^\top \Fb)^{-1} - (\bTheta^{(L)})^{-1} \|_2 + \overline{B}^2\cdot \tilde\yb^\top (\bTheta^{(L)})^{-1} \tilde\yb\\
    &\leq 2\overline{B}^2\cdot \tilde\yb^\top (\bTheta^{(L)})^{-1} \tilde\yb.
\end{align*}
Let $\Wb\in \cW$ be the parameter collection reshaped from $ m^{-1/2} \wb_{\mathrm{vec}}$. Then clearly 
\begin{align*}
    \| \Wb_l \|_F \leq m^{-1/2} \| \wb_{\mathrm{vec}} \|_2 \leq \tilde\cO\Big( \sqrt{\tilde\yb^\top (\bTheta^{(L)})^{-1} \tilde\yb } \cdot m^{-1/2} \Big),
\end{align*}
and therefore $\Wb \in \cB\big(\mathbf{0} , \cO\big( \sqrt{\tilde\yb^\top (\bTheta^{(L)})^{-1} \tilde\yb } \cdot m^{-1/2} \big) \big)$.  Moreover, by \eqref{eq:learnabledata_eq3}, we have $\hat y_i =  \la \nabla_{\Wb} f_{\Wb^{(1)}}(\xb_i) , \Wb \ra$. 
Plugging this into \eqref{eq:learnabledata_eq1} then gives
\begin{align*}
    \ell\big\{ y_i\cdot \big[ f_{\Wb^{(1)}}(\xb_i) +  \la \nabla_{\Wb} f_{\Wb^{(1)}}(\bx_i) , \Wb \ra  \big]\big\} \leq n^{-1/2}. 
\end{align*}
Since $\hat f(\cdot ) = f_{\Wb^{(1)}}(\cdot) +  \la \nabla_{\Wb} f_{\Wb^{(1)}}(\cdot) , \Wb \ra \in \cF\big(\Wb^{(1)}, \tilde\cO\big( \sqrt{\tilde\yb^\top (\bTheta^{(L)})^{-1} \tilde\yb } \big) \big)$, applying Theorem~\ref{thm:expectederrorbound} and taking infimum over $\tilde \yb$  
completes the proof.
\end{proof}

\section{Conclusions and Future Work}
In this paper we provide an expected $0$-$1$ error bound for wide and deep ReLU networks trained with SGD. This generalization error bound is measured by the NTRF function class. The connection to the neural tangent kernel function studied in \citet{jacot2018neural} is also discussed. 
Our result covers a series of recent  generalization bounds for wide enough neural networks, and provides better bounds.

An important future work is to improve the over-parameterization condition in Theorem~\ref{thm:expectederrorbound} and Corollary~\ref{col:expectederrorbound_kernel}. Other future directions include proving sample complexity lower bounds in the over-parameterized regime, implementing the results in \citet{jain2019making} to obtain last iterate bound of SGD, and establishing uniform convergence based generalization bounds for over-parameterized neural networks with methods developped in \citet{bartlett2017spectrally,neyshabur2017pac,long2019size}. 



\section*{Acknowledgement}
We would like to thank Peter Bartlett for a valuable discussion, and Simon S. Du for pointing out a related work \citep{arora2019exact}. We also thank the anonymous reviewers and area chair for their helpful comments. This research was sponsored in part by the National Science Foundation CAREER Award IIS-1906169, IIS-1903202, and Salesforce Deep Learning Research Award. The views and conclusions contained in this paper are those of the authors and should not be interpreted as representing any funding agencies.



\appendix

\section{Comparison with Recent Results}\label{section:comparisontorecent}
In this section we compare our result in Theorem~\ref{thm:expectederrorbound} with recent generalization error bounds for over-paramerized neural networks by \citet{cao2019generalization,yehudai2019power,e2019comparative}, and backup our discussions in Remark~\ref{remark:comparisiontocao} and Remark~\ref{remark:comparisiontoyehudai&e}. 

\subsection{Comparison with \citet{cao2019generalization}}\label{section:comparisontocao}
In this section we provide direct comparison between our result in Theorem~\ref{thm:expectederrorbound} and Theorem~4.4 in \citet{cao2019generalization}. To concretely compare these two results, we apply our result to the setting studied in \citet{cao2019generalization}, which is based on the following assumption.

\begin{assumption}\label{assump:nonlinearseparable}
There exist a constant $\gamma > 0$ and
\begin{align*}
f(\cdot)\in \bigg\{ f(\xb) = \int_{\RR^d} c(\ub)\sigma(\ub^\top\xb) p(\ub) \mathrm{d}\ub : \| c(\cdot) \|_{\infty} \leq 1 \bigg\},
\end{align*}
where $p(\ub)$ the density of standard Gaussian vectors, such that $ y\cdot f(\xb) \geq \gamma $ for all $(\xb,y) \in \supp(\cD)$.
\end{assumption}
Under Assumption~\ref{assump:normalizeddata} and Assumption~\ref{assump:nonlinearseparable}, in order to train the network to achieve $\epsilon$ expected $0$-$1$ loss,  \citet{cao2019generalization} gave a sample complexity of order $\tilde\cO( \poly(2^L,\gamma^{-1}) \cdot \epsilon^{-4} )$. In comparison, our result in Theorem~\ref{thm:expectederrorbound} leads to the following corollary.

\begin{corollary}\label{col:comparisonwithcao}
Under Assumption~\ref{assump:normalizeddata} and Assumption~\ref{assump:nonlinearseparable},
for any $\delta \in (0,e^{-1}]$, there exists 
    \begin{align*}
        m^* (\delta, \gamma, L,n) = \tilde\cO\big( \poly(2^L,\gamma^{-1}) \big)\cdot  n^7 \cdot \log(1 / \delta)
    \end{align*}
    such that if $m \geq m^* (\delta, R,L,n)$, then with probability at least $1 - \delta$ over the randomness of $\Wb^{(1)}$, the parameters given by Algorithm~\ref{alg:SGDrandominit} with $\eta = \kappa\cdot R / (m\sqrt{n}) $ 
    for some small enough absolute constant $\kappa$ satisfies
    \begin{align*}
        \EE \big[ L_{\cD}^{0-1}( \hat\Wb ) \big] \leq \tilde\cO\Bigg( \frac{2^L\cdot \gamma^{-1}}{\sqrt{n}} \Bigg),
    \end{align*}
    where the expectation is taken over the draws of training examples $\{(\xb_i, y_i)\}_{i=1}^n$ as well as the uniform draw of $\hat\Wb$ from $\{\Wb^{(1)},\ldots, \Wb^{(n)}\}$.
\end{corollary}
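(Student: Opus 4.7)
The strategy is to invoke Theorem~\ref{thm:expectederrorbound} with $R$ chosen to be $\tilde\cO(2^L\gamma^{-1})$, and then show that the infimum appearing on the right-hand side of \eqref{eq:result_expectederrorbound} is of order $\cO(1/\sqrt{n})$. Once these two estimates are combined, the two terms in \eqref{eq:result_expectederrorbound} are of the same order $\tilde\cO(2^L\gamma^{-1}/\sqrt{n})$, which is the bound claimed in the corollary. The width requirement $m^*(\delta,\gamma,L,n)$ then reads off from Theorem~\ref{thm:expectederrorbound} with $\poly(R,L)$ specialized to $\poly(2^L,\gamma^{-1})$.

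\textbf{Constructing an NTRF approximator for $f$.} To control the infimum term, I would exhibit an explicit $\Wb^{*}\in\cB(\mathbf{0},Rm^{-1/2})$ for which $F(\xb):=f_{\Wb^{(1)}}(\xb)+\langle\nabla f_{\Wb^{(1)}}(\xb),\Wb^{*}\rangle$ satisfies $y_i F(\xb_i)\ge B=\log\{1/[\exp(n^{-1/2})-1]\}=\cO(\log n)$ for all $i\in[n]$, so that $\ell[y_iF(\xb_i)]\le n^{-1/2}$. I would set $\Wb_l^{*}=\mathbf{0}$ for $l\ge 2$ and concentrate $\Wb^{*}$ in the first layer, exploiting the identity $\sigma(u)=\sigma'(u)u$ for ReLU: writing the $j$-th first-layer neuron of $\Wb^{(1)}$ as $\wb_j$, the $j$-th row of $\nabla_{\Wb_1}f_{\Wb^{(1)}}(\xb)$ equals $\sqrt{m}\,b_j(\xb)\sigma'(\wb_j^{\top}\xb)\xb^{\top}$, where $b_j(\xb)$ is the backward-pass scalar passing through layers $2,\ldots,L$. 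Picking
\[
[\Wb_1^{*}]_{j,:}=\frac{c_B}{m}\cdot c(\wb_j)\cdot \mathrm{sgn}(b_j(\xb))\cdot \wb_j^{\top},
\]
with $c_B=\Theta(B/\gamma)$ and $c(\cdot)$ from Assumption~\ref{assump:nonlinearseparable}, yields
\[
\langle\nabla_{\Wb_1}f_{\Wb^{(1)}}(\xb),\Wb_1^{*}\rangle \;=\; \frac{c_B}{\sqrt{m}}\sum_{j=1}^{m} |b_j(\xb)|\,c(\wb_j)\sigma(\wb_j^{\top}\xb),
\]
which, by concentration (Hoeffding/McDiarmid over the random draws of $\wb_j$ and control of $|b_j(\xb)|$), approximates a scalar multiple of $f(\xb)=\int c(\ub)\sigma(\ub^{\top}\xb)p(\ub)d\ub$ up to an additive $\tilde\cO(m^{-1/2})$ error that is negligible once $m\ge m^{*}$. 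Combined with $y_i f(\xb_i)\ge\gamma$ and Lemma~\ref{lemma:initialfunctionvaluebound} (to absorb $f_{\Wb^{(1)}}(\xb_i)=\tilde\cO(1)$), this gives $y_iF(\xb_i)\ge B$, hence the average loss is at most $n^{-1/2}$.

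\textbf{Size of $R$ and main obstacle.} The required norm of $\Wb^{*}$ is $\|\Wb_1^{*}\|_F\le (c_B/m)\cdot(\sum_j\|\wb_j\|_2^2)^{1/2}\le c_B/\sqrt{m}\cdot\tilde\cO(1)$, so that $R=\tilde\cO(c_B)=\tilde\cO(B/\gamma)$ suffices \emph{provided} the backward factors $|b_j(\xb)|$ are bounded below by $\Omega(2^{-L})$ on a constant fraction of neurons. This is the point where the factor $2^L$ enters: along a random backward path through $L-1$ ReLU layers, only a $2^{-(L-1)}$ fraction of neurons carry signal, so to get a target of magnitude $B/\gamma$ one must scale $\Wb^{*}$ by an extra $2^L$ factor, giving $R=\tilde\cO(2^L\gamma^{-1})$. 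The main obstacle is therefore the careful probabilistic analysis of the backward scalars $b_j(\xb)$: establishing, uniformly over $i\in[n]$, both a lower bound on $\sum_j|b_j(\xb_i)|$ and near-independence of $b_j(\xb_i)$ from $\mathrm{sgn}(b_j(\xb_i))$ and $c(\wb_j)$ so that the Hoeffding-type concentration can be applied. A secondary technical point is the use of $\sigma'$ in the NTRF representation versus $\sigma$ in Assumption~\ref{assump:nonlinearseparable}, which I would handle via the ReLU identity $\sigma(z)=\sigma'(z)z$ by taking $\Wb_1^{*}$ in the direction of $\wb_j$ itself, as above. Once this construction goes through, plugging $R=\tilde\cO(2^L\gamma^{-1})$ and the loss bound $\cO(n^{-1/2})$ into Theorem~\ref{thm:expectederrorbound} and absorbing $\sqrt{\log(1/\delta)/n}$ into $\tilde\cO(\cdot)$ completes the proof.
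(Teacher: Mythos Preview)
Your overall plan---invoke Theorem~\ref{thm:expectederrorbound} with $R=\tilde\cO(2^{L}\gamma^{-1})$, exhibit a function in $\cF(\Wb^{(1)},R)$ that achieves margin $B=\cO(\log n)$ on every training example so that the empirical loss term is $\cO(n^{-1/2})$, and use Lemma~\ref{lemma:initialfunctionvaluebound} to absorb $f_{\Wb^{(1)}}(\xb_i)$---is exactly the paper's strategy. The divergence is in \emph{which} layer you put $\Wb^{*}$ in, and this is where your proposal breaks.

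\textbf{The gap.} Your first-layer construction sets $[\Wb_1^{*}]_{j,:}=(c_B/m)\,c(\wb_j)\,\mathrm{sgn}(b_j(\xb))\,\wb_j^{\top}$, but $b_j(\xb)$---the backward scalar through layers $2,\ldots,L$---depends on $\xb$ via the ReLU activation patterns $\Db_{i,2},\ldots,\Db_{i,L-1}$ whenever $L\ge 3$. Hence $\mathrm{sgn}(b_j(\xb))$ is input-dependent, and your $\Wb^{*}$ is not a single element of $\cW$ but a different matrix for each $\xb_i$. The infimum in \eqref{eq:result_expectederrorbound} is over a fixed $\Wb\in\cB(\mathbf{0},Rm^{-1/2})$, so this construction does not produce an admissible comparator. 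The ``near-independence of $b_j(\xb_i)$ from $\mathrm{sgn}(b_j(\xb_i))$'' you list as the main obstacle cannot hold (the sign is a deterministic function of $b_j$), and even a uniform lower bound on $\sum_j|b_j(\xb_i)|$ would not repair the input-dependence of $\Wb^{*}$.

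\textbf{How the paper proceeds instead.} The paper places all of $\Wb^{*}$ in the \emph{last} layer: $\Wb^{*}=(\mathbf{0},\ldots,\mathbf{0},Rm^{-1/2}\balpha_{L-1}^{\top})$. Since $\nabla_{\Wb_L}f_{\Wb^{(1)}}(\xb)=\sqrt{m}\,\hb_{L-1}^{\top}$, there is no backward coefficient at all, and the NTRF function becomes $f_{\Wb^{(1)}}(\xb)+R\langle\balpha_{L-1},\hb_{L-1}\rangle$. The existence of a unit vector $\balpha_{L-1}$ with $y_i\langle\balpha_{L-1},\hb_{i,L-1}\rangle\ge 2^{-L}\gamma$ for all $i$ is exactly Lemma~\ref{lemma:linearseparable} (Lemma~C.2 of \citet{cao2019generalization}); this is where the $2^{-L}$ factor enters cleanly. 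With $R=(B+B')\,2^{L}\gamma^{-1}$ one gets $y_i f^{*}(\xb_i)\ge B$ and the rest follows from Theorem~\ref{thm:expectederrorbound}. If you want to salvage your first-layer route you would need a genuinely different argument that produces a \emph{fixed} $\Wb_1^{*}$ achieving the margin; the last-layer route avoids this difficulty entirely.
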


By setting the expected $0$-$1$ loss bound to $\epsilon$, we obtain a sample complexity of order $\tilde\cO(4^L\cdot \gamma^{-2} \epsilon^{-2})$, which is better than the sample complexity given in \citet{cao2019generalization} by a factor of $\epsilon^{-2}$.


\subsection{Comparison with \citet{yehudai2019power,e2019comparative}}\label{section:comparisontoyehudai&e}
Here we give a detailed explanation to Remark~\ref{remark:comparisiontoyehudai&e}, where we compare our result with  \citet{yehudai2019power,e2019comparative}. 
The reference function classes studied in these two papers share the same general form: 
\begin{align*}
    \big\{ f(x) = \Wb_2 \sigma(\Wb_1^{(1)} \xb): \|\Wb_2\|_F \leq C m^{-1/2}  \big\},
\end{align*}
where $C$ is a constant, and $\Wb_1^{(1)}\in \RR^{m\times d}$ is the first layer parameter matrix whose rows are sampled from certain distribution $\pi$ associated to the initialization scheme. Specifically, \citet{yehudai2019power} studied the case where $\pi$ is the uniform distribution over the $d$-dimensional cube $[-d^{-1/2}, d^{-1/2}]^d$, while \citet{e2019comparative} studied the uniform distribution over the sphere $S^{d-1}$. By standard concentration inequality, we can see that in both papers, with high probability, the distribution $\pi$ gives $\Wb_1^{(1)}$ with $\| \Wb_1^{(1)} \|_2 \approx \cO( m^{1/2})$. 
In terms of second layer initialization $\Wb_2^{(1)}$, the generalization results in both papers require that $ \| \Wb_2^{(1)} \|_2 \leq \cO(m^{-1/2}) $. With such a scaling, we can apply the following lemma. 


\begin{lemma}\label{lemma:comparisontoyehudai&e}
Suppose that $\Wb^{(1)} = (\Wb_1^{(1)},\Wb_2^{(1)}) \in \RR^{m\times d}\times \RR^{1\times m}$ be weights satisfying $\|\Wb_2^{(1)}\|_F \leq K m^{-1/2}$ for some $K = \tilde\cO(1)$, then
\begin{align*}
    \big\{ f(x) = \Wb_2 \sigma(\Wb_1^{(1)} \xb): \|\Wb_2\|_F \leq C m^{-1/2}  \big\} \subseteq \cF,
\end{align*}
where
\begin{align*}
    \cF = \big\{ \Wb_2^{(1)} \sigma(\Wb_1^{(1)} \xb) + \Wb_2 \sigma(\Wb_1^{(1)} \xb): \|\Wb_2\|_F \leq (C+K)\cdot m^{-1/2} \big\},
\end{align*}
and $\sigma(\cdot)$ is the activation function of interest.
\end{lemma}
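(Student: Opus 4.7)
The plan is to prove the inclusion directly by a rewriting argument combined with the triangle inequality. Given an arbitrary element of the left-hand class, I will add and subtract the random-initialization second-layer term $\Wb_2^{(1)}\sigma(\Wb_1^{(1)}\xb)$, which puts the function into the form prescribed by $\cF$, and then verify that the new second-layer weight matrix obeys the enlarged norm bound $(C+K)m^{-1/2}$.

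More concretely, fix any $f$ in the left-hand set, so that $f(\xb) = \Wb_2 \sigma(\Wb_1^{(1)}\xb)$ with $\|\Wb_2\|_F \leq C m^{-1/2}$. Define $\widetilde{\Wb}_2 := \Wb_2 - \Wb_2^{(1)}$. Then
\begin{align*}
f(\xb) = \Wb_2^{(1)} \sigma(\Wb_1^{(1)}\xb) + \widetilde{\Wb}_2 \sigma(\Wb_1^{(1)}\xb),
\end{align*}
which matches the functional form appearing in the definition of $\cF$. The remaining step is to control $\|\widetilde{\Wb}_2\|_F$: by the triangle inequality and the hypothesis $\|\Wb_2^{(1)}\|_F \leq K m^{-1/2}$, we have
\begin{align*}
\|\widetilde{\Wb}_2\|_F \leq \|\Wb_2\|_F + \|\Wb_2^{(1)}\|_F \leq C m^{-1/2} + K m^{-1/2} = (C+K)\cdot m^{-1/2},
\end{align*}
which is exactly the norm bound required for membership in $\cF$. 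Hence $f \in \cF$, establishing the inclusion.

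There is no real obstacle in this argument; it is a one-line additive reparametrization of the second layer. The only thing to watch is notational consistency: the free variable $\Wb_2$ in the definition of $\cF$ is a separate object from the $\Wb_2$ inside the original class, so I would use a different symbol (e.g., $\widetilde{\Wb}_2$) for the reparametrized matrix to avoid confusion. No properties of $\sigma(\cdot)$, of $\Wb_1^{(1)}$, or of the input $\xb$ are invoked beyond the fact that $\sigma(\Wb_1^{(1)}\xb)$ is a fixed vector once $\xb$ is fixed, so the lemma is in fact an identity-level statement about affine translations of the second-layer parameter space.
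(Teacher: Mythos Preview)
Your proposal is correct and matches the paper's own proof essentially line for line: the paper also sets $\Wb_2' := \Wb_2 - \Wb_2^{(1)}$, bounds $\|\Wb_2'\|_F \leq (C+K)m^{-1/2}$ via the triangle inequality, and rewrites $f$ accordingly. There is nothing to add.
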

We compare our result with the bounds given by \citet{yehudai2019power,e2019comparative} by comparing the reference function classes we use. Apparently, a larger reference function class in general gives a better generalization error bound. Such a comparison requires us to adjust the scaling of initialized parameters. Based on our previous discussion, it is easy to see that the initialized second layer weights in our work and \citet{yehudai2019power,e2019comparative} are all of the same scaling. However, the $\| \cdot \|_2$ of first layer weight matrix in \citet{yehudai2019power,e2019comparative} is larger than ours by a factor of $\sqrt{m}$. Adjusting this scaling difference will give an extra factor $\sqrt{m}$, which matches the $\sqrt{m}$ factor in the definition of our neural network function. 
Note that even after adjusting the scaling of parameters, these random feature function classes are not directly comparable, since the activation functions and the distributions of random weights are different. However, 
an informal comparison can already clearly show the advantage of our result. Moreover, we remark that at least for two-layer networks, our analysis can be easily generalized to other activation functions and initialization methods, and the resulting NTRF class should be strictly larger than the random feature function classes used in \citet{yehudai2019power,e2019comparative}. 
This justifies our discussion in Remark~\ref{remark:comparisiontoyehudai&e}.

\section{Proofs of Technical Lemmas in Section \ref{section:proof_in_mainpaper}}\label{section:proof_main}
In this section we provide the proofs of the technical lemmas in Section \ref{section:proof_in_mainpaper}. 
We first introduce some extra notations. 
Following \citet{allen2018convergence}, for a parameter collection $\Wb$ and $i\in[n]$, we denote
\begin{align*}
    \hb_{i,0} = \xb_i,~ \hb_{i,l} = \sigma(\Wb_l \hb_{i,l-1}), l\in[L-1]
\end{align*}
as the hidden layer outputs of the network. 
We also define binary diagonal matrices 
$$
\Db_{i,l} = \diag\big( \ind\{(\Wb_{l} \hb_{i,l} )_1 > 0 \},\ldots, \ind\{(\Wb_{l} \hb_{i,l} )_m > 0 \} \big), l\in[L-1]. 
$$
For $i\in [n]$ and $l\in [L-1]$, we use $\hb_{i,l}'$, $\Db_{i,l}'$ and $\hb_{i,l}^{(1)}$, $\Db_{i,l}^{(1)}$ to denote the hidden layer outputs and binary diagonal matrices with parameter collections $\Wb'$ and $\Wb^{(1)}$ respectively. We also implement the following matrix product notation which is also used in \citet{zou2018stochastic,cao2019generalization}:
\begin{align*}
    \prod_{r = l_1}^{l_2} \Ab_r :=\left\{
    \begin{array}{ll}
        \Ab_{l_2}\Ab_{l_2-1} \cdots  \Ab_{l_1} & \text{if }l_1\leq l_2 \\
        \Ib & \text{otherwise.}
    \end{array}
    \right.
\end{align*}
With this notation, we have the following matrix product representation of the neural network gradients:
\begin{align*}
    \nabla_{\Wb_{l}} f_{\Wb}(\xb_i)
    =\left\{
    \begin{array}{ll}
        \sqrt{m} \cdot \big[ \hb_{i,l-1} \Wb_{L} \big(\prod_{r=l+1}^{L-1} \Db_{i,r} \Wb_r \big) \Db_{i,l}\big]^\top, & l\in[L-1], \\
        \sqrt{m} \cdot  \hb_{i,L-1}^\top, & l=L.
    \end{array}
    \right.
\end{align*}

\subsection{Proof of Lemma~\ref{lemma:semilinear}}
The following two lemmas are proved based on several results given by \citet{allen2018convergence}. 
Note that in their paper, both the first and the last layers of the network are fixed, which is slightly different from our setting.
We remark that this difference does not affect the result.

\begin{lemma}\label{lemma:normbounds_h}
If $ \omega \leq \cO(L^{-9/2} [\log(m)]^{-3})$, then with probability at least $1 - \cO(nL)\cdot \exp[-\Omega(m\omega^{2/3}L )]$, $1/2 \leq \| \hb_{i,l} \|_2 \leq 3/2$ for all $\Wb \in \cB(\Wb^{(1)}, \omega )$, $i\in[n]$ and $l\in[L-1]$. 
\end{lemma}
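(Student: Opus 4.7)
\textbf{Proof proposal for Lemma~\ref{lemma:normbounds_h}.} The plan is to combine two ingredients already established in the over-parameterized literature (\citealp{allen2018convergence}): a concentration bound on the hidden representations at the Gaussian initialization $\Wb^{(1)}$, and a perturbation bound controlling how much these representations can drift as $\Wb$ moves inside the $\omega$-ball. Then one applies the triangle inequality layer by layer and closes the argument with a union bound over $i \in [n]$ and $l \in [L-1]$.

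First, I would invoke the initialization lemma (the analogue of Lemma~7.1 in \citealp{allen2018convergence}) which states that, under the He initialization in Algorithm~\ref{alg:SGDrandominit} and Assumption~\ref{assump:normalizeddata} ($\|\xb_i\|_2 = 1$), one has for each fixed $i\in[n]$ and $l\in[L-1]$
\begin{align*}
    \bigl| \, \| \hb_{i,l}^{(1)} \|_2 - 1 \, \bigr| \leq \tfrac{1}{8},
\end{align*}
with probability at least $1 - \exp[-\Omega(m/L)]$. This uses the fact that the variance $2/m$ in the Gaussian initialization is tuned so that ReLU preserves the expected squared norm of its pre-activation input, plus standard sub-exponential concentration of the resulting chi-squared-like quantities. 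Applying a union bound over the $nL$ pairs $(i,l)$, the above holds uniformly at probability at least $1 - nL \cdot \exp[-\Omega(m/L)]$; the $m\omega^{2/3}L$ rate in the lemma statement dominates this for $\omega$ in the claimed range, so this initialization event will fit inside the final failure budget.

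Second, I would apply the perturbation lemma (the analogue of Lemma~8.2 in \citealp{allen2018convergence}) which says that for every $\Wb\in \cB(\Wb^{(1)},\omega)$ with $\omega \leq \cO(L^{-9/2}[\log m]^{-3})$,
\begin{align*}
    \| \hb_{i,l} - \hb_{i,l}^{(1)} \|_2 \;\leq\; \cO\bigl(\omega L^{5/2}\sqrt{\log m}\bigr)
\end{align*}
uniformly in $i\in[n]$, $l\in[L-1]$, simultaneously for all $\Wb$ in the ball, with probability at least $1 - \cO(nL)\cdot \exp[-\Omega(m\omega^{2/3}L)]$. The standard proof of this uniform-in-$\Wb$ statement proceeds by an inductive decomposition of $\hb_{i,l} - \hb_{i,l}^{(1)}$ across layers, bounding the ReLU activation-pattern flip set with the standard sparsity estimate (at most $\cO(m\omega^{2/3}L)$ neurons can change sign, which is where the $m\omega^{2/3}L$ exponent arises), and controlling the perturbation term $(\Wb_l - \Wb_l^{(1)})\hb_{i,l-1}$ using the Frobenius bound $\|\Wb_l - \Wb_l^{(1)}\|_F \leq \omega$ combined with an operator-norm bound on products of random matrices with sign masks. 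Plugging $\omega \leq \cO(L^{-9/2}[\log m]^{-3})$ into the perturbation bound gives $\| \hb_{i,l} - \hb_{i,l}^{(1)} \|_2 \leq \cO(L^{-2}[\log m]^{-5/2}) \leq 1/8$ for $L,m$ large enough.

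Finally, combining the two bounds by the triangle inequality,
\begin{align*}
    \bigl| \, \| \hb_{i,l} \|_2 - 1 \, \bigr| \;\leq\; \bigl| \, \| \hb_{i,l}^{(1)} \|_2 - 1 \, \bigr| + \| \hb_{i,l} - \hb_{i,l}^{(1)} \|_2 \;\leq\; \tfrac{1}{8} + \tfrac{1}{8} = \tfrac{1}{4},
\end{align*}
which yields $1/2 \leq \|\hb_{i,l}\|_2 \leq 3/2$ as claimed. A union bound over $(i,l)$ already absorbed into step two finishes the probability accounting. The main technical obstacle is the uniform-in-$\Wb$ control required in the perturbation step: a naive covering argument over $\cB(\Wb^{(1)},\omega)$ would be wasteful, so one should instead argue (as in \citealp{allen2018convergence}) that both the activation-pattern flip set and the relevant operator-norm quantities depend on $\Wb$ in a Lipschitz way whose modulus is controlled by purely initialization-dependent events, turning the statement into a high-probability event about $\Wb^{(1)}$ alone.
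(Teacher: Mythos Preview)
Your proposal is correct and follows essentially the same route as the paper: invoke the initialization bound from \citet{allen2018convergence} to place $\|\hb_{i,l}^{(1)}\|_2$ near $1$, then the forward-perturbation bound to control $\|\hb_{i,l}-\hb_{i,l}^{(1)}\|_2 \leq \cO(\omega L^{5/2}\sqrt{\log m})$, and combine via the triangle inequality. The paper's proof differs only cosmetically (it cites Lemma~4.1 and Lemma~5.2 rather than your Lemma~7.1 and Lemma~8.2, uses the interval $[3/4,5/4]$ at initialization instead of your $1/8$ margin, and explicitly mentions the $1$-Lipschitz property of $\sigma(\cdot)$ when passing from pre-activations to $\hb_{i,l}$), but the structure and the ingredients are identical.
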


\begin{lemma}\label{lemma:normbounds_matproduct}
If $ \omega \leq \cO(L^{-6} [\log(m)]^{-3})$, then with probability at least $1 - \cO(nL^2) \cdot \exp[-\Omega(m\omega^{2/3}L )] $, uniformly over:
\begin{itemize}[leftmargin=*]
    \item any $i\in [n]$, $ 1\leq l_1 < l_2 \leq L-1$
    \item any diagonal matrices $\Db_{i,1}'',\ldots, \Db_{i,L-1}''\in [-1,1]^{m\times m}$ with at most $ \cO(m\omega^{2/3} L)$ non-zero entries, 
\end{itemize}
the following results hold:
\begin{enumerate}[label=(\roman*),leftmargin=*]
    \item For all $\Wb \in \cB(\Wb^{(1)}, \omega )$, $\| \prod_{r = l_1}^{l_2} (\Db_{i,r} + \Db_{i,r}'') \Wb_r \|_2 \leq \cO(\sqrt{L})$. \label{item:normbounds_matproduct_mid}
    \item For all $\Wb \in \cB(\Wb^{(1)}, \omega )$, $\| \Wb_L \prod_{r = l_1}^{L-1} (\Db_{i,r} + \Db_{i,r}'') \Wb_r \|_2 \leq \cO(1)$. \label{item:normbounds_matproduct_last}
    \item For all $\Wb, \Wb' \in \cB(\Wb^{(1)}, \omega )$, \label{item:normbounds_matproduct_last_difference}
    \begin{align*}
    \Bigg\| \Wb_L' \prod_{r = l_1}^{L-1} (\Db_{i,r}' + \Db_{i,r}'') \Wb_r' - \Wb_L \prod_{r = l_1}^{L-1} \Db_{i,r} \Wb_r  \Bigg\|_2 \leq \cO \Big( \omega^{1/3}L^2\sqrt{\log(m)} \Big).
\end{align*}
\end{enumerate}

\end{lemma}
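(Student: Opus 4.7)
The strategy is to reduce everything to two facts established at Gaussian initialization in \citet{allen2018convergence}, plus the sparse-sign-change property of the ball $\cB(\Wb^{(1)},\omega)$. Specifically, I would rely on: (a) the initialization-product bounds $\|\prod_{r=l_1}^{l_2}\Db_{i,r}^{(1)}\Wb_r^{(1)}\|_2\leq\cO(\sqrt{L})$ and $\|\Wb_L^{(1)}\prod_{r=l_1}^{L-1}\Db_{i,r}^{(1)}\Wb_r^{(1)}\|_2\leq\cO(1)$, which hold with probability at least $1-\cO(nL)\exp[-\Omega(m/L)]$ after a union bound over $i\in[n]$ and $1\leq l_1<l_2\leq L-1$; (b) the sparse-sign-change bound $\|\Db_{i,l}-\Db_{i,l}^{(1)}\|_0\leq\cO(m\omega^{2/3}L)$ for every $\Wb\in\cB(\Wb^{(1)},\omega)$; and (c) a ``sparse sandwich'' estimate saying that inserting a $[-1,1]$-valued diagonal with at most $s$ nonzeros between products of the form $\prod\Db^{(1)}\Wb^{(1)}$ shrinks the operator norm by a factor of $\widetilde\cO(\sqrt{s/m})$. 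Combined with the assumption that $\|\Db_{i,l}''\|_0\leq\cO(m\omega^{2/3}L)$, each perturbed diagonal $\Db_{i,l}+\Db_{i,l}''-\Db_{i,l}^{(1)}$ is still $[-1,1]$-valued with only $\cO(m\omega^{2/3}L)$ nonzero entries, so (c) applies with $s=\cO(m\omega^{2/3}L)$.

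\textbf{Parts (i) and (ii).} I would decompose $\Wb_r=\Wb_r^{(1)}+\Delta_r$ with $\|\Delta_r\|_2\leq\|\Delta_r\|_F\leq\omega$ and write $\Db_{i,r}+\Db_{i,r}''=\Db_{i,r}^{(1)}+\widetilde\Db_{i,r}$ with $\widetilde\Db_{i,r}$ sparse as above. Expanding $\prod_{r=l_1}^{l_2}(\Db_{i,r}^{(1)}+\widetilde\Db_{i,r})(\Wb_r^{(1)}+\Delta_r)$ gives a sum over choices of ``unperturbed'' vs.\ ``perturbed'' factors at each layer. The all-unperturbed term is bounded by $\cO(\sqrt{L})$ by (a). Any summand that contains at least one $\Delta_r$ factor gains a factor $\omega$ from that $\Delta_r$, a factor $\cO(\sqrt{L})$ from the remaining unperturbed run on each side, and so is at most $\cO(\omega L^{3/2})$ per insertion; summing a geometric series in the number of insertions gives an overall $\cO(\sqrt{L})$ since $\omega L\ll 1$ by the hypothesis $\omega\leq\cO(L^{-6}[\log m]^{-3})$. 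Any summand with a $\widetilde\Db_{i,r}$ factor gains the sandwich factor $\widetilde\cO(\omega^{1/3}\sqrt{L})$ from (c), which is likewise $o(1)$. This yields (i). For (ii), I would repeat the same expansion after prepending $\Wb_L=\Wb_L^{(1)}+\Delta_L$; using the initialization bound (a) for the outer layer and the Gaussian concentration $\|\Wb_L^{(1)}\|_2=\cO(1)$, the total is $\cO(1)$.

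\textbf{Part (iii): the main obstacle.} I would telescope the difference across $L-l_1+1$ factors. Enumerating the factors in the chain as $\Wb_L',\Wb_L'$-side diagonals, down through $\Wb_{l_1}'$, I would produce a sum of $\cO(L)$ terms, in each of which exactly one factor is replaced by the difference between its primed and unprimed values, with everything to the left in primed form and everything to the right in unprimed form. The three types of single-factor differences are: (a) $\Wb_L'-\Wb_L$ and $\Wb_r'-\Wb_r$, each of operator norm at most $2\omega$; and (b) $(\Db_{i,r}'+\Db_{i,r}'')-\Db_{i,r}$, a $[-1,1]$ diagonal with sparsity $\cO(m\omega^{2/3}L)$, since the sign changes between $\Db_{i,r}'$ and $\Db_{i,r}$ are both controlled by the ball radius. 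For each type (a) term the surrounding product has operator norm $\cO(1)$ by part (ii), contributing $\cO(\omega)$; for each type (b) term the sparse-sandwich bound (c), applied to the surrounding product (which has the form handled by parts (i) and (ii) after absorbing the $\Delta_r,\widetilde\Db_{i,r}$ perturbations), contributes $\cO(\omega^{1/3}L\sqrt{\log m})$. Summing over the $\cO(L)$ telescope terms gives the claimed bound $\cO(\omega^{1/3}L^2\sqrt{\log m})$.

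\textbf{Where the difficulty lies.} The delicate step is ingredient (c), the sparse-sandwich bound for the perturbed products appearing inside each telescope term, not the pristine initialization product. I would handle this by showing that replacing the perturbed surrounding factors by their initialization counterparts incurs only lower-order error (via parts (i)--(ii) and the triangle inequality), so that one can apply the initialization-level sparse-sandwich bound from \citet{allen2018convergence} directly. All probabilistic statements hold on the intersection of the events underlying (a), (b), and (c); a union bound over the $\cO(nL^2)$ choices of $(i,l_1,l_2)$ and the sparsity of sign changes gives the stated failure probability $\cO(nL^2)\exp[-\Omega(m\omega^{2/3}L)]$.
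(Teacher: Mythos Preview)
Your plan for (i) via full expansion into unperturbed/perturbed factors is valid and essentially equivalent to what the paper does, though the paper simply cites Lemma~5.6 of \citet{allen2018convergence} as a black box after observing (via the sign-change lemma) that $\Db_{i,r}+\Db_{i,r}''-\Db_{i,r}^{(1)}$ is $\cO(m\omega^{2/3}L)$-sparse. The real divergence is structural: the paper proves (iii) \emph{before} (ii) and then reads (ii) off from (iii) together with the initialization bound $\|\Wb_L^{(1)}\prod\Db_{i,r}^{(1)}\Wb_r^{(1)}\|_2\leq\cO(1)$ (Lemma~4.4(b) in \citet{allen2018convergence}). For (iii) itself the paper does \emph{not} telescope layer by layer; instead it compares each of the two products separately to the common initialization product. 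Concretely, Lemma~5.7 of \citet{allen2018convergence} is invoked twice to obtain
\begin{align*}
\Big\|\Wb_L^{(1)}\textstyle\prod(\Db_{i,r}'+\Db_{i,r}'')\Wb_r' - \Wb_L^{(1)}\textstyle\prod\Db_{i,r}^{(1)}\Wb_r^{(1)}\Big\|_2,\
\Big\|\Wb_L^{(1)}\textstyle\prod\Db_{i,r}\Wb_r - \Wb_L^{(1)}\textstyle\prod\Db_{i,r}^{(1)}\Wb_r^{(1)}\Big\|_2 \leq \cO(\omega^{1/3}L^2\sqrt{\log m}),
\end{align*}
the last-layer discrepancies $(\Wb_L'-\Wb_L^{(1)})$ and $(\Wb_L-\Wb_L^{(1)})$ are handled by (i), and a triangle inequality finishes. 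The point is that Lemma~5.7 already packages the ``perturbed versus initialization'' comparison with the Gaussian outer vector $\Wb_L^{(1)}$, so no separate sparse-sandwich-at-perturbed-weights estimate is needed.

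This is exactly the step where your proposal is under-specified. In your telescope, the type-(b) terms have a sparse diagonal sandwiched between two \emph{perturbed} runs $A\cdot S\cdot B$. Your suggested fix---replace $A,B$ by their initialization counterparts and bound the error ``via (i)--(ii)''---does not go through as stated: parts (i)--(ii) control $\|A\|_2,\|B\|_2$ but not the differences $\|A-A^{(1)}\|_2$ or $\|B-B^{(1)}\|_2$, and the crude bound $\|A-A^{(1)}\|_2\leq\|A\|_2+\|A^{(1)}\|_2=\cO(1)$ loses the $\omega^{1/3}$ factor you need. Closing this would require either an induction on the product length or a nested expansion that itself re-derives a (iii)-type estimate, which is circular in spirit. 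The paper's comparison-to-initialization route sidesteps this entirely, and this is the substantive simplification you are missing.
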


We are now ready to prove Lemma~\ref{lemma:semilinear}.
\begin{proof}[Proof of Lemma~\ref{lemma:semilinear}] 
Since $f_{\Wb'} (\xb_i) = \sqrt{m}\cdot  \Wb_{L}' \hb_{i,L - 1}'$, $f_{\Wb} (\xb_i) = \sqrt{m}\cdot  \Wb_{L} \hb_{i,L - 1}$, by direct calculation, we have
\begin{align*}
    f_{\Wb'}(\xb_i) - F_{\Wb, \Wb'}(\xb_i) &=  - \sqrt{m}\cdot \sum_{l=1}^{L-1} \Wb_L \Bigg(\prod_{r=l+1}^{L-1} \Db_{i,r} \Wb_r \Bigg) \Db_{i,l} (\Wb_l' - \Wb_l) \hb_{i,l-1}\\
    &\quad + \sqrt{m}\cdot \Wb_L' ( \hb_{i,L - 1}' - \hb_{i,L - 1} ).
\end{align*}
By Claim~8.2 in \citet{allen2018convergence}
, there exist diagonal matrices $\Db_{i,l}''\in \RR^{m\times m}$ with entries in $[-1,1]$ such that $\| \Db_{i,l}'' \|_0 \leq \cO(m \omega^{2/3} L)$ and
\begin{align*}
    \hb_{i,L-1} - \hb_{i,L-1}' = \sum_{l=1}^{L-1} \Bigg[ \prod_{r=l+1}^{L-1} (\Db_{i,r}' + \Db_{i,r}'') \Wb_r' \Bigg] (\Db_{i,l}' + \Db_{i,l}'') (\Wb_l - \Wb_l') \hb_{i,l-1}
\end{align*}
for all $i\in [n]$. Therefore
\begin{align*}
    f_{\Wb'}(\xb_i) - F_{\Wb, \Wb'}(\xb_i) &= \sqrt{m}\cdot \sum_{l=1}^{L-1} \Wb_L' \Bigg[ \prod_{r=l+1}^{L-1} (\Db_{i,r}' + \Db_{i,r}'') \Wb_r' \Bigg] (\Db_{i,l}' + \Db_{i,l}'') (\Wb_l - \Wb_l') \hb_{i,l-1} \\
    &\quad - \sqrt{m}\cdot \sum_{l=1}^{L-1} \Wb_L \Bigg(\prod_{r=l+1}^{L-1} \Db_{i,r} \Wb_r \Bigg) \Db_{i,l} (\Wb_l' - \Wb_l) \hb_{i,l-1}.
\end{align*}
By \ref{item:normbounds_matproduct_last_difference} in Lemma~\ref{lemma:normbounds_matproduct}, with probability at least $1 - \cO(nL^2) \cdot \exp[-\Omega(m\omega^{2/3}L )] $, we have
\begin{align*}
    |f_{\Wb'}(\xb_i) - F_{\Wb, \Wb'}(\xb_i)| &\leq \cO\Big( \omega^{1/3}L^2\sqrt{m\log(m)} \Big) \cdot \sum_{l=1}^{L-1} \| \hb_{i.l-1} \|_2 \cdot  \| \Wb_l' - \Wb_l \|_2\\
    &\leq \cO\Big( \omega^{1/3}L^2\sqrt{m\log(m)} \Big) \cdot \sum_{l=1}^{L-1} \| \Wb_l' - \Wb_l \|_2,
\end{align*}
where the last inequality follows by Lemma~\ref{lemma:normbounds_h}. This inequality finishes the proof.
\end{proof}

\subsection{Proof of Lemma~\ref{lemma:semiconvex}}
Intuitively, Lemma~\ref{lemma:semiconvex} follows by the fact that the composition of a convex function and an almost linear function is almost convex. The detailed proof is as follows.
\begin{proof}[Proof of Lemma~\ref{lemma:semiconvex}]
By the convexity of $\ell(z)$, we have
\begin{align*}
    L_i(\Wb') - L_i(\Wb) =  \ell[y_i  f_{\Wb'}(\xb_i) ] - \ell[ y_i f_\Wb(\xb_i) ] \geq  \ell'[ y_i f_\Wb(\xb_i)] \cdot y_i\cdot [ f_{\Wb'}(\xb_i) - f_{\Wb}(\xb_i) ].
\end{align*}
By chain rule, we have 
$$ \sum_{l=1}^L \la \nabla_{\Wb_l} L_i(\Wb), \Wb_l' - \Wb_l \ra = \ell'[ y_i f_\Wb(\xb_i)] \cdot y_i\cdot \la \nabla f_{\Wb}(\xb_i) , \Wb' - \Wb \ra.  $$
Therefore by triangle inequality, we have
\begin{align*}
    \ell'[ y_i f_\Wb(\xb_i)]\cdot y_i\cdot[ f_{\Wb'}(\xb_i) - f_{\Wb}(\xb_i) ] &\geq \ell'[ y_i f_\Wb(\xb_i)] \cdot y_i\cdot \la \nabla f_{\Wb}(\xb_i) , \Wb' - \Wb \ra  - I \\
    &= \textstyle \sum_{l=1}^L \la \nabla_{\Wb_l} L_i(\Wb), \Wb_l' - \Wb_l \ra - I,
\end{align*}
where $I = \big| \ell'[ y_i f_\Wb(\xb_i)] \cdot y_i\cdot \big[ f_{\Wb'}(\xb_i) - f_{\Wb}(\xb_i)  - \la \nabla f_{\Wb}(\xb_i) , \Wb' - \Wb \ra \big] \big|$. Then by upper-bounding $I$ with Lemma~\ref{lemma:semilinear} and the fact that $| \ell'[ y_i f_\Wb(\xb_i)] \cdot y_i | \leq 1$, we have
\begin{align*}
    L_i(\Wb') - L_i(\Wb) &\geq \sum_{l=1}^L \la \nabla_{\Wb_l} L_i(\Wb), \Wb_l' - \Wb_l \ra - \cO\Big( \omega^{1/3}L^2\sqrt{m\log(m)} \Big) \sum_{l=1}^{L-1} \| \Wb_l' - \Wb_l \|_2\\
    &\geq \sum_{l=1}^L \la \nabla_{\Wb_l} L_i(\Wb), \Wb_l' - \Wb_l \ra - \epsilon,
\end{align*}
where the last inequality again follows by $\omega \leq \cO \big( L^{-9/4} m^{-3/8} [\log(m)]^{-3/8} \epsilon^{3/4} \big)$. 
\end{proof}

\subsection{Proof of Lemma~\ref{lemma:convergence_SGD}}

To prove Lemma~\ref{lemma:convergence_SGD}, we first introduce the following lemma which provides an upper bound for the gradient of the neural network function near initialization. 

\begin{lemma}\label{lemma:NNgradient_uppbound}
There exists an absolute constant $\kappa$ such that, with probability at least $1 - \cO(nL^2) \cdot \exp[-\Omega(m\omega^{2/3}L )] $, for all $i\in [n]$, $l\in[L]$ and $\Wb\in \cB(\Wb^{(1)},\omega)$ with $ \omega \leq \kappa L^{-6} [\log(m)]^{-3}$, it holds uniformly that 
\begin{align*}
    \| \nabla_{\Wb_l} f_{\Wb}(\xb_i) \|_F, \|\nabla_{\Wb_l} L_i(\Wb)\|_F \leq \cO(\sqrt{m}).
\end{align*}
\end{lemma}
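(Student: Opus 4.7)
\textbf{Proof plan for Lemma~\ref{lemma:NNgradient_uppbound}.} The plan is to exploit the rank-one structure of the gradient of a scalar-valued deep network and then read off the required bounds from the two preceding norm lemmas. Using the matrix-product notation already introduced, for $l \in [L-1]$ the gradient can be written as
\begin{align*}
\nabla_{\Wb_l} f_{\Wb}(\xb_i) \;=\; \sqrt{m}\,\bigl[\,\vb_{i,l}\,\bigr]\,\hb_{i,l-1}^{\top},\qquad \vb_{i,l}^{\top}\;:=\;\Wb_{L}\Bigl(\prod_{r=l+1}^{L-1}\Db_{i,r}\Wb_r\Bigr)\Db_{i,l},
\end{align*}
so it is a rank-one matrix. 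Consequently $\|\nabla_{\Wb_l}f_{\Wb}(\xb_i)\|_F = \sqrt{m}\cdot \|\vb_{i,l}\|_2\cdot \|\hb_{i,l-1}\|_2$. The case $l=L$ is even simpler, since $\nabla_{\Wb_L}f_{\Wb}(\xb_i)=\sqrt{m}\,\hb_{i,L-1}^{\top}$ so only a single hidden-vector norm appears.

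The next step is to bound the two factors. For the hidden-vector factor, Lemma~\ref{lemma:normbounds_h} gives $\|\hb_{i,l-1}\|_2\leq 3/2$ uniformly for all $i\in[n]$ and $l\in[L]$ (identifying $\hb_{i,0}=\xb_i$ with $\|\xb_i\|_2=1$ via Assumption~\ref{assump:normalizeddata}) with probability at least $1-\cO(nL)\exp[-\Omega(m\omega^{2/3}L)]$, provided $\omega\leq \cO(L^{-9/2}[\log m]^{-3})$. For the matrix-product factor $\vb_{i,l}^{\top}$, apply Lemma~\ref{lemma:normbounds_matproduct}\ref{item:normbounds_matproduct_last} with $\Db_{i,r}''\equiv \mathbf{0}$ and $l_1=l$; this yields $\|\vb_{i,l}\|_2 = \|\Wb_L\prod_{r=l}^{L-1}\Db_{i,r}\Wb_r\|_2 \leq \cO(1)$ uniformly on $\cB(\Wb^{(1)},\omega)$ with probability at least $1-\cO(nL^2)\exp[-\Omega(m\omega^{2/3}L)]$, as long as $\omega\leq \cO(L^{-6}[\log m]^{-3})$. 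Taking a union bound over $i\in[n]$, $l\in[L]$ and the two high-probability events, and multiplying the two bounds together with the overall $\sqrt m$ prefactor, gives $\|\nabla_{\Wb_l}f_{\Wb}(\xb_i)\|_F\leq \cO(\sqrt{m})$.

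For the loss gradient, I would just invoke the chain rule: $\nabla_{\Wb_l}L_i(\Wb) = \ell'\bigl[y_i f_{\Wb}(\xb_i)\bigr]\cdot y_i\cdot \nabla_{\Wb_l}f_{\Wb}(\xb_i)$. For the cross-entropy loss $\ell(z)=\log(1+e^{-z})$, one has $|\ell'(z)| = 1/(1+e^z) \leq 1$, and $|y_i|=1$, so $\|\nabla_{\Wb_l}L_i(\Wb)\|_F\leq \|\nabla_{\Wb_l}f_{\Wb}(\xb_i)\|_F \leq \cO(\sqrt{m})$.

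There is no real obstacle in this lemma: every ingredient has already been prepared in Lemmas~\ref{lemma:normbounds_h} and~\ref{lemma:normbounds_matproduct}. The only slightly delicate point is recognizing that the gradient is an outer product so that its Frobenius norm factorizes exactly, and matching the parameter $\omega$ to the more restrictive condition $\omega\leq \kappa L^{-6}[\log m]^{-3}$ required by Lemma~\ref{lemma:normbounds_matproduct}, which is precisely the constraint stated in the lemma.
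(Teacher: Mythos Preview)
Your proposal is correct and matches the paper's proof essentially line for line: factor the rank-one gradient, bound $\|\hb_{i,l-1}\|_2$ via Lemma~\ref{lemma:normbounds_h}, bound the row vector via Lemma~\ref{lemma:normbounds_matproduct}\ref{item:normbounds_matproduct_last}, and finish with $|\ell'|\leq 1$ for the loss gradient. One small slip: your claimed equality $\|\vb_{i,l}\|_2 = \|\Wb_L\prod_{r=l}^{L-1}\Db_{i,r}\Wb_r\|_2$ is not right (the right-hand side has an extra factor $\Wb_l$); you should instead take $l_1=l+1$ in Lemma~\ref{lemma:normbounds_matproduct}\ref{item:normbounds_matproduct_last} and use $\|\Db_{i,l}\|_2\leq 1$, exactly as the paper implicitly does.
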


We now provide the final proof of Lemma~\ref{lemma:convergence_SGD}.

\begin{proof}[Proof of Lemma~\ref{lemma:convergence_SGD}]
Let $\omega = C_1L^{-6} m^{-3/8} [\log(m)]^{-3} \epsilon^{3/4}$, where $C_1$ is a small enough absolute constant such that the conditions on $\omega$ given in Lemmas~\ref{lemma:semiconvex} and \ref{lemma:NNgradient_uppbound} hold. It is easy to see that as long as $m \geq C_1^{-8} R^{8} L^{48} [\log(m)]^{12} \epsilon^{-6} $, we have $\Wb^* \in \cB(\Wb^{(1)}, \omega)$. 
We now show that under our parameter choice, 
$\Wb^{(1)},\ldots,\Wb^{(n)} $ are inside $ \cB(\Wb^{(1)}, \omega)$ as well. 

This result follows by simple induction. Clearly we have $\Wb^{(1)} \in \cB(\Wb^{(1)}, \omega)$. Suppose that $ \Wb^{(1)},\ldots, \Wb^{(i)} \in \cB(\Wb^{(1)}, \omega) $. Then by Lemma~\ref{lemma:NNgradient_uppbound}, for $l\in [L]$ we have $\|\nabla_{\Wb_l} L_i(\Wb^{(i)})\|_F \leq \cO(\sqrt{m})$.
Therefore
\begin{align*}
    \big\| \Wb_l^{(i+1)} - \Wb_l^{(1)} \big\|_F &\leq \sum_{j  = 1}^i\big\| \Wb_l^{(j+1)} - \Wb_l^{(j)} \big\|_F \leq \cO(\sqrt{m} \eta n).
\end{align*}
Plugging in our parameter choice 
$\eta = \nu \epsilon / (Lm)$, $n = L^2 R^2 /(2 \nu \epsilon^2)$ 
for some small enough absolute constant $\nu$ gives
\begin{align*}
     \big\| \Wb_l^{(i+1)} - \Wb_l^{(1)} \big\|_F \leq \cO\big(\sqrt{m}\cdot L R^{2}/(2m\epsilon)\big) \leq \omega,
\end{align*}
where the last inequality holds as long as $m \geq C_2 R^{16} L^{56} [\log(m)]^{12} \epsilon^{-14} $ for some large enough constant $C_2$. Therefore by induction we see that $\Wb^{(1)},\ldots,\Wb^{(n)} \in \cB(\Wb^{(1)}, \omega)$. 
As a result, the conditions of Lemmas~\ref{lemma:semiconvex} and \ref{lemma:NNgradient_uppbound} are satisfied for $\Wb^*$ and $\Wb^{(1)},\ldots, \Wb^{(n)}$.

In the following, we utilize the results of Lemmas~\ref{lemma:semiconvex} and \ref{lemma:NNgradient_uppbound} to prove the bound of cumulative loss. First of all, by Lemma~\ref{lemma:semiconvex}, we have
\begin{align*}
    L_i(\Wb^{(i)}) - L_i(\Wb^{*}) &\leq  \la \nabla_{\Wb} L_i(\Wb^{(i)}), \Wb^{(i)} - \Wb^* \ra + \epsilon \\
    &= \sum_{l=1}^L \frac{\la \Wb_l^{(i)} - \Wb_l^{(i+1)}, \Wb_l^{(i)} - \Wb_l^* \ra }{ \eta } + \epsilon
\end{align*}
Note that for the matrix inner product we have the equality $2\la \Ab, \Bb \ra = \| \Ab \|_F^2 +  \| \Bb \|_F^2 - \| \Ab - \Bb \|_F^2$. Applying this equality to the right hand side above gives
\begin{align*}
    L_i(\Wb^{(i)}) - L_i(\Wb^{*}) \leq \sum_{l=1}^L \frac{ \| \Wb_l^{(i)} - \Wb_l^{(i+1)} \|_F^2 + \| \Wb_l^{(i)} - \Wb_l^* \|_F^2 - \| \Wb_l^{(i+1)} - \Wb_l^* \|_F^2 } {2\eta} + \epsilon.
\end{align*}
By Lemma~\ref{lemma:NNgradient_uppbound}, for $l\in [L]$ we have $\| \Wb_l^{(i)} - \Wb_l^{(i+1)} \|_F \leq \eta\|\nabla_{\Wb_l} L_i(\Wb^{(i)})\|_F \leq \cO(\eta \sqrt{m})$. 
Therefore
\begin{align*}
    L_i(\Wb^{(i)}) - L_i(\Wb^{*}) \leq \sum_{l=1}^L \frac{ \| \Wb_l^{(i)} - \Wb_l^* \|_F^2 - \| \Wb_l^{(i+1)} - \Wb_l^* \|_F^2 } {2\eta} + \cO(L \eta m) + \epsilon.
\end{align*}
Telescoping over $i = 1,\ldots, n$, we obtain
\begin{align*}
    \sum_{i=1}^n L_i(\Wb^{(i)}) &\leq \sum_{i=1}^n L_i(\Wb^{*}) + \sum_{l=1}^L \frac{ \| \Wb_l^{(1)} - \Wb_l^* \|_F^2 } {2\eta} + \cO(L \eta n m) + n\epsilon\\
    &\leq \sum_{i=1}^n L_i(\Wb^{*}) + \frac{ L R^{2} } {2\eta m} + \cO(L \eta n m) + n\epsilon,
\end{align*}
where in the first inequality we simply remove the term $-\| \Wb_l^{(n+1)} - \Wb_l^* \|_F^2/(2\eta) $ to obtain an upper bound, and the second inequality follows by the assumption that $\Wb^*\in \cB(\Wb^{(1)},Rm^{-1/2})$. 
Plugging in the parameter choice 
$\eta = \nu \epsilon / (Lm)$, $n = L^2 R^2 /(2 \nu \epsilon^2)$ 
for some small enough absolute constant $\nu$  gives
\begin{align*}
    \sum_{i=1}^n L_i(\Wb^{(i)}) &\leq \sum_{i=1}^n L_i(\Wb^{*}) + 3n\epsilon,
\end{align*}
which finishes the proof.
\end{proof}

\subsection{Proof of Lemma~\ref{lemma:initialfunctionvaluebound}}
Here we prove Lemma~\ref{lemma:initialfunctionvaluebound}. The proof essentially follows by standard Gaussian tail bound and a bound on the length of last hidden layer output vector.
\begin{proof}[Proof of Lemma~\ref{lemma:initialfunctionvaluebound}]

By Lemma~4.1 in \citet{allen2018convergence}, with probability at least $1 - \cO(nL)\cdot \exp[ -\Omega(m/L) ] > 1 - \delta / 2$ over the randomness of $\Wb^{(1)}_1,\ldots,\Wb^{(1)}_{L-1}$, $\| \hb_{i,L-1}^{(0)} \|_2 \in [1/2,3/2]$ for all $i\in[n]$. Condition on $\Wb^{(1)}_1,\ldots,\Wb^{(1)}_{L-1}$, $f_{\Wb^{(1)}}(\xb_i) = \sqrt{m}\cdot \Wb^{(1)}_{L} \hb_{i,L-1}$ is a Gaussian random variable with variance $\|\hb_{i,L-1}\|_2^2$. Therefore by standard Gaussian tail bound and union bound, with probability at least $1 - \delta$, $|f_{\Wb^{(1)}} (\bx_i)| \leq \cO(\sqrt{\log( n / \delta)})$ for all $i\in [n]$. 
\end{proof}

\section{Proofs of Results in Section~\ref{section:comparisontorecent}}
In this section we provide the proofs of Corollary~\ref{col:comparisonwithcao} and Lemma~\ref{lemma:comparisontoyehudai&e}.

\subsection{Proof of Corollary~\ref{col:comparisonwithcao}}

The following lemma is a simplified version of Lemma~C.2 in \citet{cao2019generalization}. Since the proof is almost the same as the proof of Lemma~C.2 in \citet{cao2019generalization}, except replacing the $\epsilon$-net argument with a simple union bound over $n$ training examples, we omit the proof detail here.

\begin{lemma}\label{lemma:linearseparable}
For any $\delta > 0$, if $m \geq K\cdot 4^L L^4 \gamma^{-2} \log(nL/\delta)$ for some large enough absolute constant $K$, then with probability at least $1 - \delta$, there exists $\balpha_{L-1} \in \RR^m$ such that $y_i\cdot \la \balpha_{L-1} , \hb_{i,L-1} \ra  \geq 2^{-L} \gamma $ for all $i\in [n]$.
\end{lemma}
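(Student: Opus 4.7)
The plan is to follow the strategy of Lemma~C.2 in \citet{cao2019generalization}, constructing $\balpha_{L-1}$ layer by layer from the random weights at initialization, and then, as the lemma statement indicates, replacing their $\epsilon$-net over the input sphere by a simple union bound over the $n$ training points. For the base case at $l=1$, Assumption~\ref{assump:nonlinearseparable} gives $f(\xb)=\int c(\ub)\sigma(\ub^\top\xb)p(\ub)\,d\ub$ with $p$ the standard Gaussian density. Using the positive homogeneity of ReLU and the fact that the rows of $\Wb_1^{(1)}$ are i.i.d.\ $N(\mathbf{0},(2/m)\Ib)$, I can rewrite $f(\xb)$ as an expectation of the form $\EE_{\wb}[\tilde c(\wb)\sigma(\wb^\top\xb)]$ for a bounded coefficient function $\tilde c$. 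Replacing this expectation by the empirical average over the $m$ realized rows yields a candidate $\balpha_1\in\RR^m$ with $\EE[\la\balpha_1,\hb_{i,1}\ra]=f(\xb_i)$ and sub-Gaussian fluctuations of order $1/\sqrt{m}$; Hoeffding together with a union bound over the $n$ training points gives $y_i\la\balpha_1,\hb_{i,1}\ra\ge \gamma/2$ for all $i$ as soon as $m\gtrsim \gamma^{-2}\log(n/\delta)$.

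For the inductive step, the goal is to show that whenever there exists $\balpha_l$ with $y_i\la\balpha_l,\hb_{i,l}\ra\ge \gamma_l$ for all $i$, one can construct $\balpha_{l+1}$ with $y_i\la\balpha_{l+1},\hb_{i,l+1}\ra\ge \gamma_l/2$. I would use the identity $z=\sigma(z)-\sigma(-z)$ together with a Gaussian integral (``smoothing'') representation of the linear functional $\vb\mapsto\la\balpha_l,\vb\ra$, so that the functional becomes an expectation over a Gaussian that matches the distribution of the rows of $\Wb_{l+1}^{(1)}$. Approximating this expectation by a Monte Carlo average over the actual rows of $\Wb_{l+1}^{(1)}$, conditional on $\Wb_1^{(1)},\ldots,\Wb_l^{(1)}$, produces the coefficient vector $\balpha_{l+1}$. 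Hoeffding applied to this Monte Carlo sum, combined with the hidden-layer norm bound $\|\hb_{i,l}\|_2\in[1/2,3/2]$ implied by Lemma~\ref{lemma:normbounds_h} at $\omega=0$, gives the per-layer margin contraction $\gamma_{l+1}\ge\gamma_l/2$ at the cost of requiring $m\gtrsim (L/\gamma_l)^2\log(nL/\delta)$. Iterating $L-2$ times yields $\gamma_{L-1}\ge 2^{-L}\gamma$, and tracking the worst-case width requirement across layers produces the stated $m\gtrsim 4^L L^4\gamma^{-2}\log(nL/\delta)$ bound.

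The final step is a union bound over the $L$ layers and the $n$ training points; because the argument only needs to control concentration at the fixed inputs $\xb_1,\ldots,\xb_n$ rather than over an $\epsilon$-net of $S^{d-1}$, the factor of $d$ appearing in the width requirement of \citet{cao2019generalization} is replaced by $\log(n/\delta)$, which is precisely the simplification noted before the statement. The main obstacle is the inductive step: the separator $\balpha_{l+1}$ depends simultaneously on $\balpha_l$ and on $\Wb_{l+1}^{(1)}$, and the targets $\hb_{i,l}$ are themselves random. I would handle this by conditioning on $\Wb_1^{(1)},\ldots,\Wb_l^{(1)}$, treating the $\hb_{i,l}$ as fixed bounded vectors, and then applying concentration only to the fresh randomness of $\Wb_{l+1}^{(1)}$; propagating the resulting failure probability through a union bound over $l\in[L-1]$ and $i\in[n]$ is precisely what produces the $L^4$ and $\log(nL/\delta)$ factors in the overall width requirement.
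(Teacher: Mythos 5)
Your proposal reconstructs exactly the argument the paper intends: it defers to Lemma~C.2 of \citet{cao2019generalization}, whose proof builds the separator layer by layer from the integral representation in Assumption~\ref{assump:nonlinearseparable}, uses the $z=\sigma(z)-\sigma(-z)$ smoothing trick with Monte Carlo approximation over the fresh Gaussian rows at each layer, and halves the margin per layer, with the only modification being the replacement of the $\epsilon$-net over $S^{d-1}$ by a union bound over the $n$ fixed training inputs. Your sketch matches this structure, including the conditioning on earlier layers' weights and the accounting that yields the $4^L L^4\gamma^{-2}\log(nL/\delta)$ width requirement, so it is essentially the same proof.
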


\begin{proof}[Proof of Corollary~\ref{col:comparisonwithcao}]
Set $B = \log \{ 1/[\exp(n^{-1/2}) - 1 ]\} = \cO(\log(n))$, then for cross-entropy loss we have $\ell(z) \leq n^{-1/2}$ for $z \geq B $. Moreover, let $B' = \max_{i\in[n]} |f_{\Wb^{(1)}} (\bx_i)| $. Then by Lemma~\ref{lemma:initialfunctionvaluebound}, with probability at least $1 - \delta$, $B' \leq \cO(\sqrt{\log( n / \delta)})$ for all $i\in [n]$.

By Lemma~\ref{lemma:linearseparable}, with probability at least $1 - \delta$, there exists $\balpha_{L-1} \in S^{m-1}$ such that $y_i\cdot \la \balpha_{L-1} , \hb_{i,L-1} \ra  \geq 2^{-L} \gamma $ for all $i\in [n]$. Therefore, setting $R = (B + B')\cdot 2^{L} \gamma^{-1} = \tilde\cO(2^{L} \gamma^{-1})$, we have
\begin{align*}
    \Wb = (\mathbf{0},\ldots,\mathbf{0}, Rm^{-1/2}\cdot \balpha_{L-1}^\top) \in \cB(\mathbf{0}, Rm^{-1/2}).
\end{align*}
Moreover, $f^*(\cdot) := f_{\Wb^{(1)}}(\cdot) + \la \nabla_\Wb f_{\Wb^{(1)}}(\cdot), \Wb \ra $ satisfies $f^*\in \cF(\Wb^{(1)},R)$, and
\begin{align*}
    y_i\cdot f^*(\xb_i) &= y_i\cdot f_{\Wb^{(1)}}(\xb_i) + y_i\cdot \la \sqrt{m}\cdot  \hb_{i,L-1}^\top, Rm^{-1/2}\cdot \balpha_{L-1}^\top \ra \\
    &\geq (B + B')\cdot 2^{L} \gamma^{-1} \cdot 2^{-L}\gamma - B'\\
    &\geq B.
\end{align*}
Therefore we have $\ell(y_i\cdot f^*(\xb_i)) \leq \epsilon$, $i\in[n]$. 
Applying Theorem~\ref{thm:expectederrorbound} gives
\begin{align*}
    \EE \big[ L_{\cD}^{0-1}( \hat\Wb ) \big] \leq \tilde\cO\Bigg( \frac{2^L\cdot \gamma^{-1}}{\sqrt{n}} \Bigg) + \cO\Bigg[ \sqrt{\frac{\log(1 / \delta)}{n}} \Bigg] = \tilde\cO\Bigg( \frac{2^L\cdot \gamma^{-1}}{\sqrt{n}} \Bigg).
\end{align*}
This finishes the proof.
\end{proof}

\subsection{Proof of Lemma~\ref{lemma:comparisontoyehudai&e}}
Here we give the proof of  Lemma~\ref{lemma:comparisontoyehudai&e}. It is based on a simple construction.
\begin{proof}[Proof of Lemma~\ref{lemma:comparisontoyehudai&e}] 
For any $f(x)= \Wb_2 \sigma(\Wb_1^{(1)} \xb)$ with $ \|\Wb_2\|_F \leq C m^{-1/2} $, by the assumption that $\|\Wb_2^{(1)}\|_F \leq K m^{-1/2}$ for some $K = \tilde\cO(1)$, we have $\Wb_2' := \Wb_2 - \Wb_2^{(1)}$ satisfies $\|\Wb_2'\|_F\leq (C+K)\cdot m^{-1/2}$. Therefore
\begin{align*}
    f(x)= \Wb_2 \sigma(\Wb_1^{(1)} \xb) = \Wb_2^{(1)} \sigma(\Wb_1^{(1)} \xb) + \Wb_2' \sigma(\Wb_1^{(1)} \xb) \subseteq \cF.
\end{align*}
This finishes the proof.
\end{proof}

\section{Proofs of Lemmas in Section~\ref{section:proof_main}}\label{section:appendixA}
In this section we give the proofs of lemma~\ref{lemma:normbounds_h}, Lemma~\ref{lemma:normbounds_matproduct} and Lemma~\ref{lemma:NNgradient_uppbound} in Section~\ref{section:proof_main}.


\subsection{Proof of Lemma~\ref{lemma:normbounds_h}}

\begin{proof}[Proof of Lemma~\ref{lemma:normbounds_h}]
By Lemma~4.1 in \citet{allen2018convergence}, with probability at least $1 - \cO(nL)\cdot \exp[ -\Omega(m/L) ]$, $\| \hb_{i,l}^{(1)} \|_2 \in [3/4,5/4]$ for all $i\in[n]$ and $l\in [L-1]$. Moreover, by Lemma~5.2 in \citet{allen2018convergence} and the $1$-Lipschitz continuity of $\sigma(\cdot)$, with probability at least $ 1 - \cO(nL)\cdot \exp[-\Omega(m \omega^{2/3}L)]$, $\| \hb_{i,l} - \hb_{i,l}^{(1)} \|_2 \leq \cO(\omega L^{5/2} \sqrt{\log(m)} )$. Therefore by the assumption that $ \omega \leq \cO(L^{-9/2} [\log(m)]^{-3}) $, 
we have $ \| \hb_{i,l} \|_2 \in [1/2, 3/2] $ for all $i\in [n]$ and $l\in[L-1]$.
\end{proof}

\subsection{Proof of Lemma~\ref{lemma:normbounds_matproduct}}
We first introduce the following lemma characterizing the activation changes between networks with two close enough parameter sets $\Wb$ and $\Wb'$. This lemma directly follows by Lemma~8.2 in \citet{allen2018convergence} and triangle inequality.
\begin{lemma}\label{lemma:differencesparsity}
If $\omega \leq \cO(L^{-9/2} [\log(m)]^{-3/2})$, 
then with probability at least $1 - \cO(nL)\cdot \exp[-\Omega(m \omega^{2/3}L)]$, 
\begin{align*}
    \| \Db_{i,l} - \Db_{i,l}' \|_0 \leq \cO(L\omega^{2/3} m)
\end{align*}
for all $\Wb,\Wb' \in \cB( \Wb^{(1)},\omega )$, $i\in [n]$ and $l\in [L-1]$.
\end{lemma}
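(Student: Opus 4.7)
The plan is to reduce the pairwise comparison between $\Db_{i,l}$ and $\Db_{i,l}'$ to two one-sided comparisons against the activation pattern $\Db_{i,l}^{(1)}$ at initialization, and then combine them via a triangle-type bound on $\|\cdot\|_0$ for diagonal matrices. First, I would recall that Lemma~8.2 of \citet{allen2018convergence} states that for any $\Wb \in \cB(\Wb^{(1)}, \omega)$ with $\omega \leq \cO(L^{-9/2}[\log m]^{-3/2})$, the activation pattern flips in at most $\cO(L\omega^{2/3}m)$ coordinates at every hidden layer, i.e. $\|\Db_{i,l} - \Db_{i,l}^{(1)}\|_0 \leq \cO(L\omega^{2/3}m)$, and this holds with probability at least $1 - \cO(nL)\exp[-\Omega(m\omega^{2/3}L)]$ after a union bound over $i \in [n]$ and $l \in [L-1]$. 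Since Allen-Zhu's result is stated uniformly over the ball $\cB(\Wb^{(1)},\omega)$, the same good event simultaneously controls every $\Wb$ and every $\Wb'$ in the ball, so no extra probability is lost in quantifying over pairs.

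Second, I would note that for diagonal matrices $\Db_{i,l}, \Db_{i,l}', \Db_{i,l}^{(1)}$, any index $j$ at which $(\Db_{i,l})_{jj} \neq (\Db_{i,l}')_{jj}$ must differ from $(\Db_{i,l}^{(1)})_{jj}$ on at least one of the two sides. Therefore
\begin{align*}
\|\Db_{i,l} - \Db_{i,l}'\|_0 \leq \|\Db_{i,l} - \Db_{i,l}^{(1)}\|_0 + \|\Db_{i,l}' - \Db_{i,l}^{(1)}\|_0 \leq \cO(L\omega^{2/3}m),
\end{align*}
which is the claimed bound with the claimed failure probability.

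I do not anticipate any serious obstacle: the assumption $\omega \leq \cO(L^{-9/2}[\log m]^{-3/2})$ matches exactly the regime required by Lemma~8.2 of \citet{allen2018convergence}, and no extra union bound is needed beyond the one used to quantify over training examples and layers in that lemma. The only mild subtlety is to be explicit that the triangle inequality for $\|\cdot\|_0$ is valid here because the three matrices are simultaneously diagonal, so sparsity is equivalent to counting disagreements in diagonal entries.
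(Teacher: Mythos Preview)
Your proposal is correct and is essentially identical to the paper's own argument: the paper simply states that the lemma ``directly follows by Lemma~8.2 in \citet{allen2018convergence} and triangle inequality,'' which is exactly the reduction-to-initialization plus $\|\cdot\|_0$ triangle-inequality step you spell out. The only minor remark is that the triangle inequality for $\|\cdot\|_0$ holds for arbitrary matrices, not just diagonal ones, so that caveat is unnecessary---but it does no harm.
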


\begin{proof}[Proof of Lemma~\ref{lemma:normbounds_matproduct}]
We first prove \ref{item:normbounds_matproduct_mid} and \ref{item:normbounds_matproduct_last_difference}, and then use \ref{item:normbounds_matproduct_last_difference} to prove \ref{item:normbounds_matproduct_last}.

By Lemma~\ref{lemma:differencesparsity}, with probability at least $1 - \cO(nL)\cdot \exp(-\Omega(L\omega^{2/3}m))$, $ \| \Db_{i,l} - \Db_{i,l}^{(1)} \|_0 \leq \cO(L\omega^{2/3} m) $ for all $i\in [n]$ and $l\in[L-1]$. Therefore we have $\| \Db_{i,r} + \Db_{i,r}'' - \Db_{i,l}^{(1)} \|_0 \leq \cO(L\omega^{2/3} m) $ for all $i\in [n]$ and $l\in[L-1]$. Therefore by Lemma~5.6 in \citet{allen2018convergence}, with probability at least $1 - \cO(nL^2) \cdot \exp[-\Omega(m\omega^{2/3}L )] $ we have
$ \big\| \prod_{r = l_1}^{l_2} (\Db_{i,r} + \Db_{i,r}'') \Wb_r \big\|_2 \leq \cO(\sqrt{L})$. This completes the proof of \ref{item:normbounds_matproduct_mid} in Lemma~\ref{lemma:normbounds_matproduct}.

Similarly, to prove \ref{item:normbounds_matproduct_last_difference}, applying Lemma~\ref{lemma:differencesparsity} to $\Wb'$ gives that with probability at least $1 - \cO(nL)\cdot \exp(-\Omega(L\omega^{2/3}m))$, $ \| \Db_{i,l}' + \Db_{i,r}'' - \Db_{i,l}^{(1)} \|_0 \leq \cO(L\omega^{2/3} m) $ for all $i\in [n]$ and $l\in[L-1]$. Now by Lemma~5.7 in \citet{allen2018convergence}\footnote{Note that $\sqrt{m}\cdot \Wb_L^{(1)}$ is a random vector following the Gaussian distribution $N(\mathbf{0}, \Ib)$, which matches the distribution of the last layer parameters in \citet{allen2018convergence} for the binary classification case, where the output dimension of the network is $1$.} with $s = \cO(m \omega^{2/3} L)$ to $\Wb$ and $\Wb'$, we have
\begin{align}
    &\sqrt{m}\cdot \Bigg\| \Wb_L^{(1)} \prod_{r = l_1}^{L-1} (\Db_{i,r}' + \Db_{i,r}'') \Wb_r' - \Wb_L^{(1)} \prod_{r = l_1}^{L-1} \Db_{i,r}^{(1)} \Wb_r^{(1)}  \Bigg\|_2 \leq \cO \Big( \omega^{1/3}L^2\sqrt{m\log(m)} \Big),\label{eq:normbounds_matproduct_eq1} \\
    &\sqrt{m}\cdot \Bigg\| \Wb_L^{(1)} \prod_{r = l_1}^{L-1} \Db_{i,r} \Wb_r - \Wb_L^{(1)} \prod_{r = l_1}^{L-1} \Db_{i,r}^{(1)} \Wb_r^{(1)}  \Bigg\|_2 \leq \cO \Big( \omega^{1/3}L^2\sqrt{m\log(m)} \Big). \label{eq:normbounds_matproduct_eq2}
\end{align}
Moreover, by result \ref{item:normbounds_matproduct_mid}, we have
\begin{align}
    &\Bigg\| (\Wb_L'  -  \Wb_L^{(1)}) \prod_{r = l_1}^{L-1} (\Db_{i,r}' + \Db_{i,r}'') \Wb_r' \Bigg\|_2 \leq \cO(\sqrt{L}\omega) \leq \cO \Big( \omega^{1/3}L^2\sqrt{\log(m)} \Big), \label{eq:normbounds_matproduct_eq3}\\
    &\Bigg\| (\Wb_L  -  \Wb_L^{(1)}) \prod_{r = l_1}^{L-1} \Db_{i,r} \Wb_r \Bigg\|_2 \leq \cO(\sqrt{L}\omega) \leq \cO \Big( \omega^{1/3}L^2\sqrt{\log(m)} \Big). \label{eq:normbounds_matproduct_eq4}
\end{align}
Combining equations \eqref{eq:normbounds_matproduct_eq1}, \eqref{eq:normbounds_matproduct_eq2}, \eqref{eq:normbounds_matproduct_eq3}, \eqref{eq:normbounds_matproduct_eq4} and applying triangle inequality gives the desired final result \ref{item:normbounds_matproduct_last_difference}.

Finally to prove \ref{item:normbounds_matproduct_last}, we write
\begin{align*}
    \Bigg\| \Wb_L \prod_{r = l_1}^{L-1} (\Db_{i,r} + \Db_{i,r}'') \Wb_r \Bigg\|_2 & \leq \Bigg\| \Wb_L \prod_{r = l_1}^{L-1} (\Db_{i,r} + \Db_{i,r}'') \Wb_r - \Wb_L^{(1)} \prod_{r = l_1}^{L-1} \Db_{i,r}^{(1)} \Wb_r^{(1)} \Bigg\|_2 \\
    &\quad + \Bigg\| \Wb_L^{(1)} \prod_{r = l_1}^{L-1} \Db_{i,r}^{(1)} \Wb_r^{(1)} \Bigg\|_2.
\end{align*}
Applying \ref{item:normbounds_matproduct_last_difference} and (b) in Lemma~4.4 in \citet{allen2018convergence}, with probability at least $1 - \cO(nL)\cdot \exp[-\Omega(m/L)]$, we obtain
\begin{align*}
    \Bigg\| \Wb_L \prod_{r = l_1}^{L-1} (\Db_{i,r} + \Db_{i,r}'') \Wb_r \Bigg\|_2 \leq \cO \Big( \omega^{1/3}L^2\sqrt{\log(m)} \Big) + \cO(1) = \cO(1).
\end{align*}
This gives \ref{item:normbounds_matproduct_last}.
\end{proof}

\subsection{Proof of Lemma~\ref{lemma:NNgradient_uppbound}}

\begin{proof}[Proof of Lemma~\ref{lemma:NNgradient_uppbound}]
By Lemma~\ref{lemma:normbounds_h}, clearly we have
\begin{align*}
    \| \nabla_{\Wb_l} f_{\Wb} (\xb_i) \|_F = \| \sqrt{m} \cdot \hb_{i,L-1} \|_2 \leq \cO(\sqrt{m})
\end{align*}
for all $\Wb\in \cB(\Wb^{(1)},\omega)$ and $i\in [n]$. 
For $l\in [L - 1]$, by direct calculation we have
\begin{align*}
    \| \nabla_{\Wb_{l}} f_{\Wb}(\xb_i) \|_F &= \sqrt{m} \cdot \Bigg\| \hb_{i,l-1} \Wb_{L} \Bigg(\prod_{r=l+1}^{L-1} \Db_{i,r} \Wb_r \Bigg) \Db_{i,l}\Bigg\|_F\\
    & = \sqrt{m} \cdot \| \hb_{i,l-1} \|_2 \cdot \Bigg\| \Wb_{L} \Bigg(\prod_{r=l+1}^{L-1} \Db_{i,r} \Wb_r \Bigg) \Db_{i,l}\Bigg\|_2.
\end{align*}
Therefore by Lemma~\ref{lemma:normbounds_h} and
\ref{item:normbounds_matproduct_last} in Lemma~\ref{lemma:normbounds_matproduct},
we have
\begin{align*}
    \| \nabla_{\Wb_{l}} f_{\Wb}(\xb_i) \|_F \leq \cO(\sqrt{m}).
\end{align*}
Finally, for $\|\nabla_{\Wb_l} L_i(\Wb^{(i)})\|_F$ we have
\begin{align*}
    \|\nabla_{\Wb_l} L_i(\Wb^{(i)})\|_F \leq \big| \ell'[y_i\cdot f_{\Wb^{(i)}}(\xb_i) ] \cdot y_i \big| \cdot \big\|\nabla_{\Wb_l} f_{\Wb^{(i)}}(\xb_i)\big\|_F \leq \sqrt{m}.
\end{align*}
This completes the proof.
\end{proof}


\section{Experimental Results}\label{section:experimental_results}
In this section we provide numerical calculations of the generalization bounds given by Theorem~\ref{thm:expectederrorbound} and Corollary~\ref{col:expectederrorbound_kernel} on the MNIST dataset \citep{lecun1998gradient}. The main goal of these calculations is to demonstrate that the bounds given in our results are informative and can provide practical insight.

We have done experiments of a five-layer fully connected NN on MNIST dataset (3 versus 8), and calculated the first terms in the bounds given by Theorem~\ref{thm:expectederrorbound} and Corollary~\ref{col:expectederrorbound_kernel}. 
\begin{itemize}[leftmargin= *]
    \item In Figure~\ref{subfig:1}, we plot the first term in the bound of Theorem~\ref{thm:expectederrorbound}  with different values of $R$ and $m$, where the value are approximated by solving the constrained convex optimization problem $\inf_{ f \in \cF( \Wb^{(1)}, R )} \{ (4/n) \cdot \sum_{i=1}^n \ell[y_i\cdot f(\xb_i) ] \}$ with projected stochastic gradient descent.
    \item To demonstrate the scaling of the bound in Corollary~\ref{col:expectederrorbound_kernel}, we calculate the value of $\sqrt{\yb^\top (\bTheta^{(L)})^{-1} \yb /n}$, where $\yb$ is the true label vector with random flips. We plot $\sqrt{\yb^\top (\bTheta^{(L)})^{-1} \yb /n}$ in Figure~\ref{subfig:2} by varying the level of label noise, i.e., ratio of the labels that are flipped. Note that to simplify calculation, we do not consider the $\tilde \yb$ introduced in Corollary~\ref{col:expectederrorbound_kernel}. Clearly, our calculation here gives an upper bound of the generalization bound in Corollary~\ref{col:expectederrorbound_kernel}.
\end{itemize}

\begin{figure}[h]
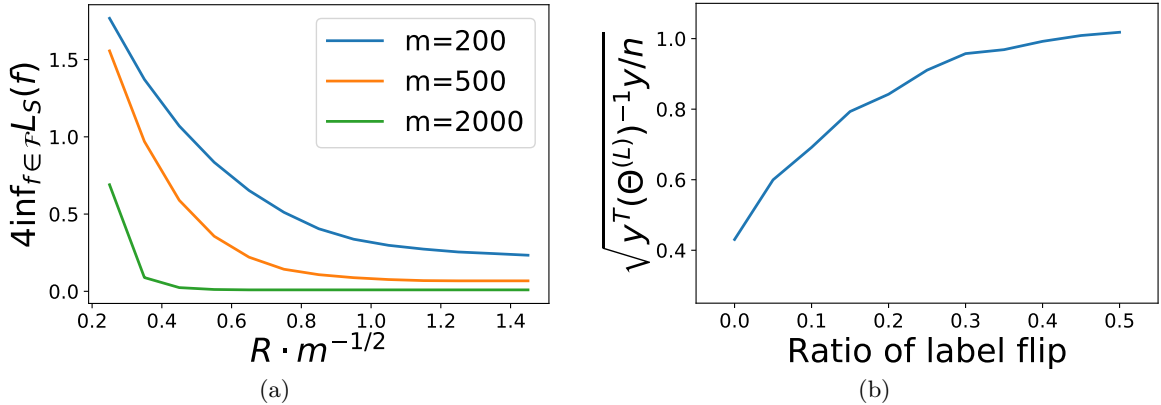

	\begin{center}
		\begin{tabular}{cc}
			\subfigure[]{\includegraphics[width=0.45\linewidth,angle=0]{loss_R2.pdf}\label{subfig:1}}
			& 
			\subfigure[]{\includegraphics[width=0.465\linewidth,angle=0]{loss_flip.pdf}\label{subfig:2}}
		    \end{tabular}
	\end{center}
	\vskip -10pt
	\caption{(a) Evaluation of the first term in the bound of Theorem~\ref{thm:expectederrorbound} for different values of $R$ and $m$. (b) Evaluation of the first term of the bound in Corollary~\ref{col:expectederrorbound_kernel} with different ratio of label flip.} 
	\label{fig:boundevaluation}
\end{figure}

We can see that our bounds in both  Theorem~\ref{thm:expectederrorbound} and Corollary~\ref{col:expectederrorbound_kernel} give small and meaningful values. Moreover, these experimental results also back up our theoretical analysis. In Figure~\ref{subfig:1}, the curves corresponding to different $m$'s also validate our theoretical result that the wider the network is, the shorter SGD needs to travel to fit the training data. In addition, the larger the size of reference function class (i.e., $R$), the smaller $\inf_{ f \in \cF( \Wb^{(1)}, R )} \{ (4/n) \cdot \sum_{i=1}^n \ell[y_i\cdot f(\xb_i) ] \}$ will be. 
In Figure~\ref{subfig:2}, we can see that the noisier the labels, the larger the term $\sqrt{\yb^\top (\bTheta^{(L)})^{-1} \yb /n}$ is. When most of the labels are true labels, our bound can predict good test error; when the labels are purely random (i.e., ratio of label flip  $=0.5$), the bound on the test error can be larger than one. To sum up, these numerical results demonstrate the practical values of our generalization bounds, and suggest that our bounds can provide good measurements of the data classifiability. 

\bibliography{ReLU}
\bibliographystyle{ims}

\end{document}